\DeclareMathOperator*{\argmin}{arg\,min}
\newtheorem{theorem}{Theorem}[section]
\newtheorem{lemma}[theorem]{Lemma}
\theoremstyle{definition}
\theoremstyle{remark}
\newcommand{\norm}[1]{\lVert#1\rVert}
\newcommand{\Norm}[1]{\left\lVert#1\right\rVert}
\newcommand{\EE}[1]{\mathbb{E}\left[ #1 \right]}
\newcommand{\Idot}[2]{\left\langle #1 , #2 \right\rangle}
\newcommand{\idot}[2]{\langle #1 , #2 \rangle}
\newcommand{\iid}{\overset{iid}{\sim }}
\newcommand{\R}{\mathbb{R}}
\newcommand{\Z}{\mathbb{Z}}
\newcommand{\E}{\mathbb{E}}
\newcommand{\bZ}{{\bf{Z}}}
\newcommand{\calF}{\mathcal{F}}
\newcommand{\calR}{\mathcal{R}}
\newcommand{\calO}{\mathcal{O}}
\newcommand{\calI}{\mathcal{I}}
\newcommand{\calD}{\mathcal{D}}
\newcommand{\calG}{\mathcal{G}}
\newcommand{\calN}{\mathcal{N}}
\newcommand{\vY}{\vec{A}}
\newcommand{\bY}{{\bf{A}}}
\newcommand{\subI}{{\scriptscriptstyle \calI}}
\newcommand{\subO}{{\scriptscriptstyle \calO}}
\newcommand{\tp}{\tilde{p}}
\newcommand{\tq}{\tilde{q}}
\newcommand{\hq}{\hat{q}}
\newcommand{\hQ}{Q}
\newcommand{\hP}{P}
\newcommand{\hpsi}{\hat{\psi}}
\newcommand{\hPsi}{\hat{\Psi}}
\newcommand{\bA}{A}
\newcommand{\avec}{\vec{a}}
\newcommand{\ud}{\mathrm{d}}
\newcommand{\mxphi}{\varphi_{\mathrm{max}}}
\newcommand{\vp}{P}
\newcommand{\vq}{Q}
\newcommand{\bsetI}{U}
\newcommand{\bsetO}{V}
\newcommand{\where}{\mathrm{where }}
\DeclareMathOperator*{\Var}{Var}
\DeclareMathOperator*{\Unif}{Unif}
\newcommand{\VVar}[1]{\Var\left[ #1 \right]}
\begin{document}

\twocolumn[ 

\aistatstitle{Fast Function to Function Regression}


\aistatsauthor{Junier Oliva \quad Willie Neiswanger 
                \quad Barnabas Poczos \quad Eric Xing \quad Jeff Schneider}
\aistatsaddress{ Machine Learning Department\\Carnegie Mellon University  }
]


\begin{abstract} 
We analyze the problem of regression when both input covariates and output
responses are functions from a nonparametric function class. Function to
function regression (FFR) covers a large range of interesting applications
including time-series prediction problems, and also more general
tasks
like studying a mapping between two separate types of distributions.
However, previous nonparametric estimators for FFR type problems scale badly computationally with the number of input/output pairs in a data-set.
Given the complexity of a mapping between general functions it may be
necessary to consider large data-sets in order to achieve a low estimation risk.
To address this issue, we develop a novel scalable nonparametric estimator, the Triple-Basis Estimator (3BE), which is capable of operating over datasets with many instances.
To the best of our knowledge, the 3BE is the first nonparametric FFR estimator that can scale to massive datasets. 
We analyze the 3BE's risk and derive an upperbound rate. Furthermore, we show an improvement of several orders of magnitude in terms of prediction speed and a reduction in error over previous estimators in various real-world data-sets.
\end{abstract}


\section{Introduction}
\label{introduction}
Modern data-sets are not only growing in
quantity of instances but the instances themselves are growing in complexity
and dimensionality. The goal of this paper is to perform regression with
data-sets that are massive not only in terms of the number of instances but also in terms of
the complexity of instances; specifically we consider functional data. We study function to function regression
(FFR) where one aims to learn a mapping $f$ that takes in a general input functional
covariate $p:\R^l\mapsto\R$ and outputs a functional response
$q=f(p):\R^k\mapsto\R$. In general, functions are infinite
dimensional objects; hence, the problem of FFR is not immediately solvable by
traditional regression methods on finite vectors. Furthermore, unlike with typical regression
problems, neither the covariate nor the response will be directly observed
(since it is infeasible to directly observe functions). Previous nonparametric estimators for FFR do not scale computationally to large data-sets. However, 
large data-sets are often needed to achieve a low risk; to mitigate this issue we introduce the Triple-Basis Estimator (3BE).

The FFR framework is quite general and includes many interesting problems. For instance, one may consider input/output functions that are probability distribution functions (pdfs). An
example of a financial domain related FFR problem with density functions is learning the
mapping that takes in the pdf of stock prices in a specific industry and
outputs the pdf of stock prices in another industry. 
Additionally, in cosmology one may be interested in regressing a mapping that takes in the pdf of simulated particles from a computationally inexpensive but inaccurate simulation and outputs the corresponding pdf of particles from a computationally expensive but accurate simulation. In essence, one would be enhancing the inaccurate simulation using previously seen data from accurate simulations. There are also many non-distributional FFR problems. For example, one may view foreground/background segmentation as a FFR problem that maps an image's $p$ function to a segmentation's $q$ function, where $p(x,y)$ is a function that takes in a pixel's $(x,y)$ position and outputs the corresponding pixel's intensity, and $q(x,y)$ is function that takes in a pixel's position and outputs $1$ if the pixel is in the foreground and $0$ otherwise.

\begin{figure}[t!]
        \centering
        \subfigure[Forward Prediction]{{\label{fig:time-series-forward}\includegraphics[width=.2\textwidth]{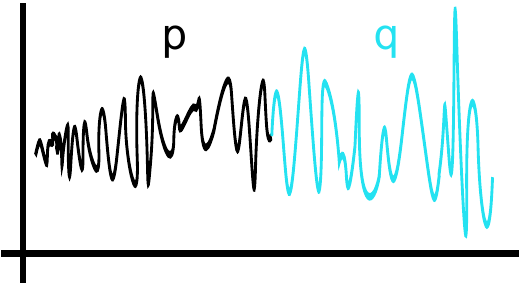}}}
        \hspace{1mm}
        \subfigure[Co-occurring Series]{{\hspace{.1cm}\label{fig:time-series-co-occur}\includegraphics[width=.2\textwidth]{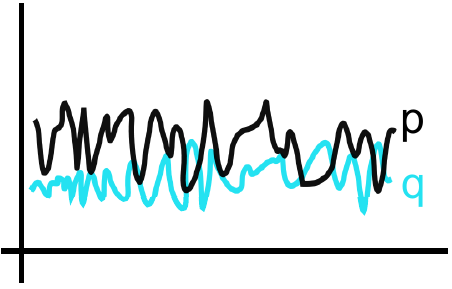}}}
        \caption{
            (a) One may consider trying to predict a later portion of a
            time-series when given the prior portion of a time-series as a FFR
            problem. (b) One may try to predict one co-occurring time-series
            when given another. 
        }\label{fig:time-series}
\end{figure}

Moreover, several time-series tasks may be posed in the FFR framework (see Figure
\ref{fig:time-series}). Suppose, for example, that one is interested in
predicting the next unit interval of a time-series given the previous unit interval; then, one may frame this as a FFR problem by letting
input functions $p : [0,1] \mapsto \R$ be the function representing the time-series
during the first unit interval and output functions $q:[0,1]\mapsto\R$ be the
function representing the time-series during the next unit interval (Figure
\ref{fig:time-series-forward}). A related problem is that of predicting
co-occurring functions (Figure \ref{fig:time-series-co-occur}). 
An interesting application of predicting
co-occurring functions is with motion capture data, where one may be interested
in predicting the movement of joints that are occluded given the movement of
observed joints. 

As stated previously, the problem of FFR boils down to the study of a mapping
between infinite dimensional objects. Thus, the regression task would benefit
greatly from learning on data-sets with a large number of input/output pairs.
However, many nonparametric estimators for regression problems \emph{do not scale 
well } in the number of instances in a data-set. Thus, if the number of instances is in
the many thousands, millions, or even more, then it will be infeasible to use such
an estimator. This leads to a paradox: one wants many instances in a
data-set in order to effectively learn the FFR mapping, but one also wants a
low number of instances in order to avoid a high computational cost. We resolve this issue through the 3BE, which we will show can perform FFR in a scalable manner.

The data-sets we consider are as follows. Since general functions are infinite dimensional we cannot work over a data-set
$\bar{\calD} = \{(p_i,q_i)\}_{i=1}^N$ where $q_i=f(p_i)$. Instead we shall work
with a data-set of instances that are (inexact) observation pairs from input/output functions
$\calD=\{(\vp_i,\vq_i)\}_{i=1}^N$ where $\vp_i$, and $\vq_i$ are some form of
empirical observations from $p_i$ and $q_i$ (see Figure \ref{fig:gmodel}). For
example, one may consider the functional observations to be a set of $n$ noisy
function evaluations at uniformly distributed points, or a sample of $n$ points drawn from $p$ and $q$ respectively
(when $p,q$ are distributions). Using $\calD$ we will make an estimate of
$\bar{\calD}$ as $\tilde{\calD}=\{(\tp_i,\tq_i)\}_{i=1}^N$ where $\tp_i,\tq_i$ are
functional estimates created using $\vp_i,\vq_i$ respectively. The task then is
to estimate $q_0=f(p_0)$ as $\hat{q}_0=\hat{f}(\tp_0)$ when given a functional observation, $\vp_0$, of an
unseen function $p_0$. 

\begin{figure}[t]
  \centering
    \includegraphics[width=0.45\textwidth]{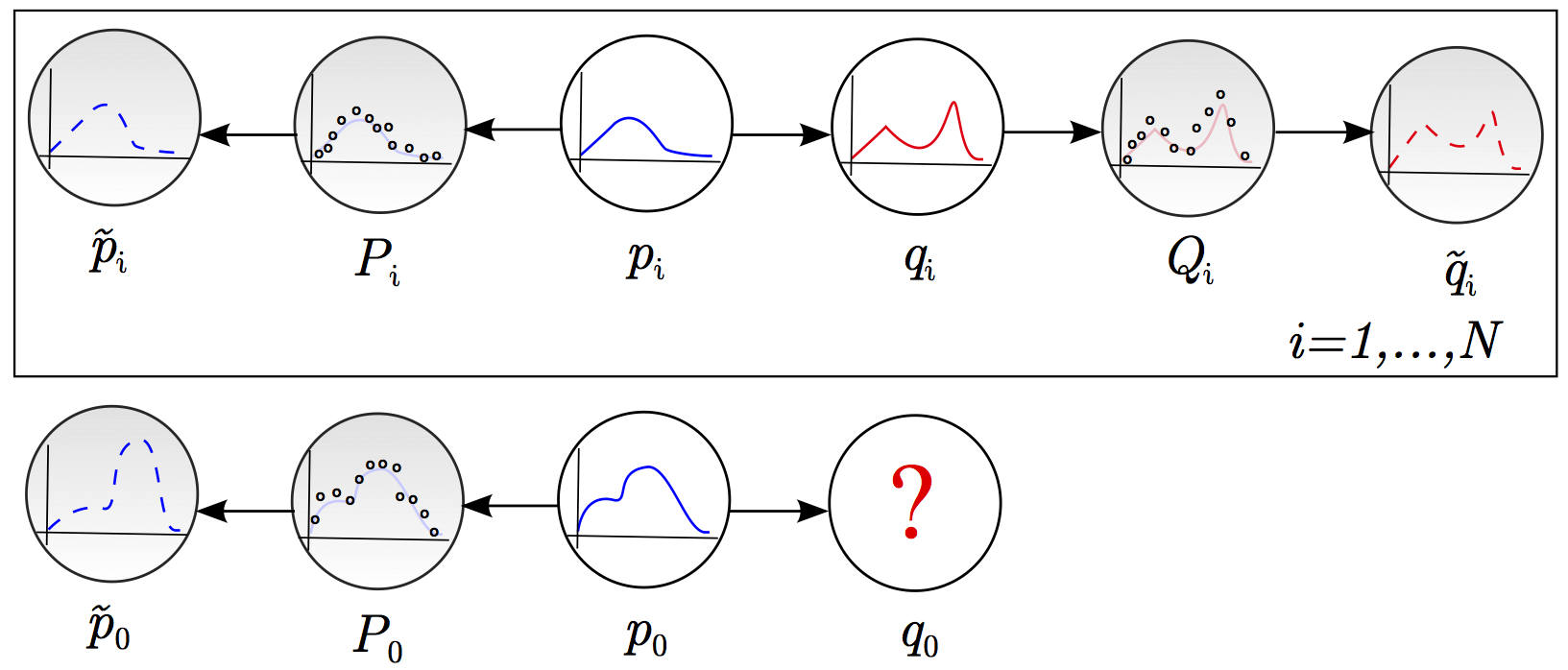}
    \caption{
    We observe a data-set of input/output
    functional observation pairs $\{(\vp_i,\vq_i) \}_{i=1}^N$, where $\vp_i$,
    $\vq_i$ are some inexact observations of functions $p_i$ and $q_i$ such as
    a set of noisy function evaluations at uniformly distributed points.
    $\vp_i$, $\vq_i$ then are used to make function approximations
    $\tp_i,\tq_i$, which in turn are used to predict the response $q_0$ for a
    unseen query input function $p_0$. 
    }
    \label{fig:gmodel}
\end{figure}

Our approach will be as follows. First, we convert the infinite dimensional task of estimating the output function $q_0$ into a finite dimensional problem by projecting $q_0$ into a finite number of basis functions (focusing on the crucial characteristics of $q_0$, roughly speaking). Then, to estimate the projections onto the basis functions we embed the input functions into a nonlinear space where linear operations are approximately evaluations of a nonlinear mapping $f$ in a broad function class. Finally, $f$ is estimated by empirically minimizing the risk of a linear operation in the nonlinear embedding of input functions for predicting the basis projections of output functions in a dataset.

\paragraph*{Our Contribution}
We develop the Triple-Basis Estimator (3BE), a novel nonparametric estimator for FFR that
scales to large data-sets. The 3BE is the first estimator of its kind, allowing one to regress functional responses given functional covariates in massive data-sets. 
Furthermore, we analyze the $L_2$ risk of the 3BE
under nonparametric assumptions. Lastly, we show an improvement of several orders of magnitude over existing estimators in terms of prediction time as well as a reduction in error in various real-world data-sets.


\section{Related Work}
\label{relatedwork}
A previous nonparametric FFR estimator was proposed in \cite{kadri2010nonlinear}. \cite{kadri2010nonlinear} attempts to perform FFR on a functional
RKHS. 
That is, if we consider $\calF$ as a functional Hilbert space, where
$f\in\calF$ is such that $f:\calG_x\mapsto\calG_y$, then $f$ is estimated by 
$
f^* = \argmin_{\hat{f}} \sum_{i=1}^N \norm{q_i-\hat{f}(p_i)}^2_{\calG_y} + \lambda \norm{f}^2_{\calF}.
$
However, when each function is observed though $n$ noisy function evaluations
this estimator will require the inversion of a $Nn\times Nn$ matrix, which will
be computationally infeasible for data-sets of even a modest size. 

In addition, \cite{olivadistribution} provides an estimator for doing
FFR for the special case where both input and output functions
are probability distribution functions. The estimator, henceforth referred to as the linear smoother estimator
(LSE), works as follows when given a training data-sets of $\calD=\{(P_i,Q_i)\}_{i=1}^N$ of empirical functional observations and
$\tilde{\calD}=\{(\tp_i,\tq_i)\}_{i=1}^N$ of function estimates and a function
estimate $\tp_0$ of a new query input function: 
\begin{align}
\hat{f}(\tp_0) &= \sum_{i=1}^N
W(\tp_i,\tp_0) \tq_i\ \where\  \\
W(\tp_i,\tp_0)
&= \begin{cases} \frac{K(D(\tp_i,\tp_0))}{\sum\limits_{j=1}^N K(D(\tp_j,\tp_0))} &\mbox{if } \sum_j K(D(\tp_j,\tp_0))>0 \\
0 & \mbox{otherwise }. \end{cases} \label{eq:LSE}
\end{align}
Here $K:\R\rightarrow [0,\infty)$ is taken to be a symmetric kernel with bounded support, and $D$ is
some metric over functions. 
However, while such an estimator is useful for smaller FFR problems, it may not be used in larger data-sets.
Clearly, the LSE must perform a kernel evaluation with all input distributions in one's data-set to produce a prediction, leading to a total computational cost of $\Omega(Nn)$ when considering the cost of computing metrics $D(\tp_j,\tp_0)$ when $|P_i| \asymp |Q_i| \asymp n$. This implies, for example, that obtaining estimates for each training instance scales as $\Omega(n N^2)$, which will be prohibitive for big data-sets. 

Previous work for nonparametric estimation in large data-sets with functional inputs includes
\cite{oliva2013fast}. There an estimator was proposed for scalable
learning of a distribution input covariate to real-value output response
regression problem. We note however that it is not immediately clear how to achieve a scalable
estimator for regression functional responses with functional covariates, nor how to analyze such
an estimator's risk since general functional responses
are infinite dimensional.

We note
further that work has been done in linear models for FFR (e.g.
\cite{ramsay2006functional, oliva2013fusso}). However, such models work over a
strong assumption on the linearity of the mapping $f$, and will not be able to
capture non-linear mappings. Moreover, FFR is a specific case of
general functional analysis
\cite{ramsay2006functional,ferraty2006nonparametric, ramsay2002applied}.


\section{Model}
\label{model}

We expound upon our model of input/output functions and the mapping between them. Later, we introduce the 3BE and its risk for the case when one has a data-set of pairs of input/output functional
observations that are a set of noisy function evaluations at uniformly distributed
points. However, the following is generalizable for the case where one observes function evaluations at a fixed grid of points or function observations of samples from distributions. In short, we assume smooth input/output functions that are well approximated by a finite number of basis functions. Further, we consider a nonparametric mapping between them, where the projection of the output function onto each basis function may be written as an infinite linear combination of RBF kernel evaluations between the input function and unknown functions (see below).

We take our data-set to be input/output empirical function observation pairs:
\begin{align}
\calD &= \{(\vp_i,\vq_i)\}_{i=1}^N \ \where\ \\
\vp_i &= \left\{ p_i(u_{ij})+\epsilon_{ij} \right\}_{j=1}^{n_i},\ \vq_i = \left\{ q_i(v_{ij})+\xi_{ij} \right\}_{j=1}^{m_i}, \label{eq:func_obs}
\end{align}
with sample points $u_{ij}\iid \Unif([0,1]^l)$, $v_{ij}\iid \Unif([0,1]^k)$,
and noise $\epsilon_{ij}\iid\Xi_\epsilon$, $\xi_{ij}\iid\Xi_\xi$. With error distributions $\Xi_\xi,\Xi_\epsilon$, s.t.
$\EE{\epsilon_{ij}}=\EE{\xi_{ij}}=0$,
$\VVar{\epsilon_{ij}},\VVar{\xi_{ij}}\leq\varsigma < \infty$. Furthermore,
$p_i\in \calI$, $p_i:[0,1]^l\mapsto \R$, $q_i\in \calO$, $q_i:[0,1]^k\mapsto
\R$, $q_j = f(p_j)$\footnote[1]{Similarly, one may consider a model $q_i(x) =
[f(p_i)](x) + \zeta w_i(x)$, where $w_i$ is a standard Wiener process. This
however will be akin to adding variance to our noisy function evaluations,
hence we omit $w_i$ for simplicity.}, and $p_i\iid\Phi$ where $\calI$ and
$\calO$ are some class of input/output functions and $\Phi$ is some measure
over $\calI$. Furthermore, we shall assume that $n_i\asymp n$ and $m_i \asymp
m$. We shall use $\calD$ to make estimates of the true input/output functions
$\tilde{D}=\{(\tp_i,\tq_i)\}_{i=1}^N$, which will then be used to estimate the
output function $q_0$ corresponding to an unseen input function $p_0$. 

\subsection{Basis Functions and Projections}
Let $\{\varphi_i\}_{i\in\Z}$ be an orthonormal basis for $L_2([0,1])$. Then,
the tensor product of $\{\varphi_i\}_{i\in\Z}$ serves as an orthonormal basis
for $L_2([0,1]^d)$; that is, the following is an orthnormal basis for $L_2([0,1]^d)$
\begin{gather*}
\{\varphi_\alpha\}_{\alpha\in\Z^d} \quad \mathrm{where} \quad \varphi_\alpha(x) = \prod_{i=1}^d \varphi_{\alpha_i}(x_i),\ x\in [0,1]^d.
\end{gather*}
So we have that $\forall \alpha,\rho\in \Z^d,\ \langle\varphi_\alpha,\varphi_\rho \rangle= I_{\{\alpha=\rho\}}$.
Let $h\in L_2([0,1]^d)$, then 
\begin{align}
    h(x)&=\sum_{\alpha\in\Z^d} a_\alpha(h)\varphi_\alpha(x)  \quad \where \\
\quad a_\alpha(h) &= \langle\varphi_\alpha, h\rangle = \int_{[0,1]^d} \varphi_\alpha(z)h(z)\ud z\ \in \R.
\end{align}

As previously mentioned, a data-set of estimated input/output function pairs,
$\tilde{\calD}=\{(\tp_i,\tq_i)\}_{i=1}^N$, will be constructed from the
data-set of input/output function evaluation sets
$\calD=\{(\vp_i,\vq_i)\}_{i=1}^N$. Suppose function
$h$ has a corresponding set of evaluations $H =
\{y_j = h(u_j)+ \epsilon_j\}_{j=1}^r$ where $u_j\iid \Unif([0,1]^d)$ and
$\EE{\epsilon_j}=0$, $\EE{\epsilon_j^2}<\infty$. Then, $\tilde{h}$, the estimate of $h$, will
be as follows:
\begin{align}
\tilde{h}(x) &= \sum_{\alpha \in M }a_\alpha(H)\varphi_\alpha(x) \quad \where \\
a_\alpha(H) &= \frac{1}{r}\sum_{j=1}^{r} y_j\varphi_\alpha(u_j) \label{eq:coef-est},
\end{align}
and $M$ is a finite set of indices for basis functions.

\subsubsection{Cross-validation}
\label{sec:pccv}
In practice, one would choose indices $M$ in \eqref{eq:coef-est} through cross-validation. The number of projection coeffients one chooses will depend on the smoothness of the function $h$ aswell as the number of points in $H$. Typically, a larger $|i|$ will correspond to a higher frequency 1-dimensional basis function $\varphi_i$; thus, a natural way of selecting $M$ is to consider sets 
\begin{align}
M_t=\{\alpha \in \Z^d : \norm{\alpha}_2 \leq t\} \label{eq:cv_Mt}
\end{align} with $t\in (0,\infty)$. One would then choose the value of $t$ (setting $M=M_t$) that minimizes a loss, such as the mean squared error between $\tilde{h}(u_i)$ and $y_i$. We shall see below that considering $M_t$ in this manner corresponds to a smoothness assumption on the class of input/output functions.

\subsection{Function to Function Mapping}
Let $p\sim\Phi$, $q=f(p)$, as in \eqref{eq:sob-ellp-inp}, we have that
\begin{align}
q(x) = [f(p)](x) 
&= \sum_{\alpha\in\Z^k} a_\alpha(f(p))\varphi_\alpha(x) 
\\&= \sum_{\alpha\in\Z^k} f_\alpha(p)\varphi_\alpha(x)
\end{align}
where $f_\alpha(p) = a_\alpha(f(p))$. Hence we may think of
$f:\calI\mapsto\calO$ as consisting of countably many functions $\{f_\alpha\ |\
f_\alpha:\calI\mapsto\R,\ \alpha\in\Z^k\}$, where each $f_\alpha$ is responsible for the mapping of $p$ to the projection of $q$ on to $\varphi_\alpha$. We take $f_\alpha$ functions 
to be a nonparametric linear smoother on a possibly infinite set of functions weighted by a kernel:
\begin{gather}
f_\alpha(p) = \sum_{i=1}^{\infty} \theta_{\alpha i} K_{\sigma}(g_{\alpha i}, p)\ \where \label{eq:q_form} \\
\theta_{\alpha i}\in\R,\ g_{\alpha i} \in \calI
\end{gather}
We shall consider the following class of functions:
\begin{align}
\calF_\sigma &= \{f\ :\ \forall \alpha\in\Z^k\ \norm{\theta_{\alpha}}_1\leq B_{\alpha}, f_{\alpha} \text{ as in } \eqref{eq:q_form}  \}
\label{eq:func_class_set}.
\end{align}


\section{Triple-Basis Estimator}
\label{triplebasis}
If the tail-frequency behavior of output functions are controlled, then we may
effectively estimate output functions using a finite number of projection
coefficients; thus, we only need to estimate a finite number of the $f_\alpha$
functions. The 3BE consists of two orthonormal bases for estimating input and output functions respectively and a random basis to estimate the mapping between them. To efficiently estimate the $f_\alpha$ functions, we shall
use random basis functions from Random Kitchen Sinks (RKS)
\cite{rahimi2007random}. We shall show that to approximate $f_\alpha$, we need only estimate a linear mapping in the random RKS features. \cite{rahimi2007random} shows that if one has a
shift-invariant kernel $K$ (in particular we consider the RBF kernel
$K(x)=\exp(-x^2/2)$), then for fixed $\omega_i \stackrel{iid}{\sim} \calN(0,\sigma^{-2}I_d)$, $b_i \stackrel{iid}{\sim} \Unif([0,2\pi])$, we have that for each $x,y \in \R^d$:
 \begin{align}
    &K(\Norm{x-y}_2/\sigma) \approx z(x)^Tz(y),\ \where \\
    &z(x) \equiv
 \sqrt{\tfrac{2}{D}}\left[\cos(\omega_1^Tx+b_1) \cdots \cos(\omega_D^Tx+b_D)\right]^T,
 \label{eq:rks_feats}
\end{align}
and $D$ is the number of random basis functions (see \cite{rahimi2007random} for approximation quality) . Let $\bsetI$ and $\bsetO$ be a set of indices for basis functions to project input and output functions respectively:
\begin{gather}
\bsetI = \{\alpha_1,\ldots,\alpha_s\}, 
\bsetO = \{\beta_1,\ldots,\beta_r\}. \label{eq:Mt}
\end{gather}
In practice one would choose $\bsetI$ and $\bsetO$ through cross-validation (see $\S$\ref{sec:pccv}).
First note that:
\begin{align}
\idot{\tp_i}{\tp_j}  &=  \Idot{\sum_{\alpha\in \bsetI} a_{\alpha}(\hP_i) \varphi_{\alpha}}{\sum_{\alpha\in \bsetI} a_{\alpha}(\hP_j) \varphi_{\alpha}}  \\
 &=  \sum_{\alpha\in \bsetI} \sum_{\beta\in \bsetI}a_{\alpha}(\hP_i) a_{\beta}(\hP_j)\Idot{\varphi_{\alpha}}{\varphi_{\beta}} \\
 =&  \sum_{\alpha\in \bsetI} a_{\alpha}(\hP_i) a_{\alpha}(\hP_j)
 = \Idot{\avec_{\bsetI}(\hP_i)}{\avec_{\bsetI}(\hP_j)},
\end{align}
where $\avec_{\bsetI}(\hP_i) = (a_{\alpha_1}(\hP_i),\ldots,a_{\alpha_s}(\hP_i))^T$. Thus,
$
\Norm{\tp_i-\tp_j}_2 = \Norm{\avec_{\bsetI}(\hP_i)-\avec_{\bsetI}(\hP_j)}_2,
$
where the norm on the LHS is the $L_2$ norm and the $\ell_2$ on the RHS. 

Consider a fixed $\sigma$, and let $\omega_i \stackrel{iid}{\sim} \calN(0,\sigma^{-2}I_s)$, $b_i \stackrel{iid}{\sim} \Unif[0,2\pi]$, be fixed. Then
\begin{align}
f_\alpha(p_0)
&=\sum_{i=1}^{\infty} \theta_{\alpha i} K_\sigma(\norm{g_{\alpha i}-p_0}_2)\\ &\approx \sum_{i=1}^{\infty} \theta_{\alpha i} K_\sigma(\norm{\avec_{\bsetI}(g_{\alpha i})-\avec_{\bsetI}(\hP_0)}_2) \\
&\approx \sum_{i=1}^\infty \theta_{\alpha i}z(\avec_{\bsetI}(g_{\alpha i}))^Tz(\avec_{\bsetI}(\hP_0))\\
&= 
 \hspace{1mm} \psi_\alpha^T z(\avec_{\bsetI}(\hP_0)) \label{eq:lin_est_approx}
\end{align}
where $\psi_\alpha = \sum_{i=1}^\infty \theta_{\alpha i}z(\avec_{\bsetI}(g_{\alpha i})) \in \R^s$. Hence, by
\eqref{eq:lin_est_approx} $f_\alpha$ is approximately linear in $z(\avec_{\bsetI}(\cdot))$; so, we consider linear estimators in the non-linear space induced by $z(\avec_{\bsetI}(\cdot))$. In particular, we take the OLS estimator using the data-set $\{(z(\avec_{\bsetI}(\hP_i)),a_\alpha(\hQ_i))\}_{i=1}^N$, and for each $f_\alpha$ we estimate :
\begin{align}
    \hat{f}_\alpha(\vp_0) &\equiv \hpsi^T_\alpha z(\avec_{\bsetI}(\hP_0))\quad\where\\
    \hpsi_\alpha &\equiv \argmin_\beta \norm{\vY_\alpha-\bZ\beta}_2^2 =                (\bZ^T\bZ)^{-1}\bZ^T\vY_\alpha \label{eq:OLSest}
\end{align}
for $\vY_\alpha=(a_\alpha(\hQ_1),\ldots,a_\alpha(\hQ_N))^T$, and $\bZ$ the $N\times D$ matrix $\bZ=[z(\avec_\bsetI(\hP_1))\cdots z(\avec_\bsetI(\hP_N)) ]^T$. 
Suppose that the indices of basis functions we project output function onto is $\bsetO$ (as in \eqref{eq:Mt}), then the set of functions we estimate is $\{\hat{f}_\alpha\ :\
\alpha\in \bsetO\}$. Let $\hat{f}_{1:r}(\vp_0) =
(\hat{f}_{\alpha_1}(\vp_0),\ldots,\hat{f}_{\alpha_r}(\vp_0))^T$, $\bY_{1:r}=[\vY_{\alpha_1},\ldots,\vY_{\alpha_r}]\in\R^{N\times r}$:
\begin{align}
\hat{f}_{1:r}(\vp_0) &= \hPsi^T z(\avec_{\bsetI}(\vp_0))\quad\where\\
\hPsi &= (\bZ^T\bZ)^{-1}\bZ^T\bY_{1:r} \label{eq:approx-pc}.
\end{align}

\subsection{Evaluation Computational Complexity}
We see that after computing $\hPsi$, evaluating the estimated projection
coefficients for a new function $p_0$ amounts to performing a matrix
multiplication of a $r\times D$ matrix with a $D \times 1$ vector. Including
the time required for computing $z(\avec_U(\hP_0))$, the computation required
for the evaluation, \eqref{eq:approx-pc}, is: 1) the time for evaluating the
projection coefficients $\avec_U(\hP_0)$, $O(sn)$; 2) the time to compute the
RKS features $z(\cdot)$, $O(Ds)$; 3) the time to compute the matrix
multiplication, $\hPsi^T z(\avec_U(\hP_0))$, $O(rD)$. Hence, the total time is
$O(rD+Ds+sn)$. 

We'll see that we may choose $D=O(n\log(n))$, $s=O(n)$, and $r=O(m)$. If we assume further
that $m\asymp n$, the total runtime for evaluating $\hat{f}(\tp_0)$ is
$O(n^2\log(n))$. Since we are considering data-sets where the number of
instances $N$ far outnumbers the number of points per sample set $n$,
$O(n^2\log(n))$ is a \emph{substantial improvement} over $\Omega(Nn)$ for the LSE; indeed, the 
LSE requires a metric evaluation with every training-set input function \eqref{eq:LSE} where the 3BE does not. Furthermore, the
space complexity is much improved for the 3BE since we only
need to store the $O(n^2\log(n))$ matrix $\Psi$ and the $O(n^2\log(n))$ total
space for the RKS basis functions $\{(\omega_i,b_i)\}$. Contrast this with the
space required for the LSE, $\Omega(Nn)$, which is much larger
for our case of $n\ll N$. Lastly, note that to evaluate $\hat{q}_0(x) =
[\hat{f}(\hP_0)](x)$ once one has computed $\hat{f}_{1:r}(\hP_0)$, one only needs to
compute $\hat{q}_0(x) = \langle \hat{f}_{1:r}(\hP_0),\vec{\varphi}_{1:r}(x) \rangle$ where
$\vec{\varphi}_{1:r}(x) = (\varphi_{\alpha_1}(x) ,\ldots,
\varphi_{\alpha_r}(x))$.

\paragraph{Triple-Basis Estimator}
We note that a straightforward extension to the 3BE is to
use a ridge regression estimate on features $z(\avec_t(\cdot))$ rather than a
OLS estimate. That is, for $\lambda\geq 0$ let
\begin{align}
\hpsi_{\alpha\lambda} &\equiv \argmin_\beta \norm{\vY_\alpha-\bZ\beta}_2^2 + \lambda \norm{\beta}_2^2 \\
&= (\bZ^T\bZ+\lambda I)^{-1}\bZ^T\vY_\alpha \label{eq:ridgeest}.
\end{align}
The Ridge-3BE is still evaluated via a matrix multiplication,
and our complexity analysis holds. 

\subsection{Algorithm}
We summarize the basic steps for training the 3BE in practice given a data-set of empirical functional observations $\calD=\{(\vp_i,\vq_i)\}_{i=1}^N$, parameters $\sigma$ and $D$ (which may be cross-validated), and an orthonormal basis $\{\varphi_i\}_{i\in\Z}$ for $L_2([0,1])$.
\begin{enumerate}
  \item Determine the sets of basis functions $U$ and $V$ \eqref{eq:Mt} for approximating $p$, and $q$ respectively. For each $j$ in a subset $J \subseteq \{1,\ldots,N\}$\footnote{Empirically it has been observed that $\bar{t}$ and $\bar{c}$ perform well even when $|J|$ is much smaller than $N$} one can select a set $M_{t_j}$ \eqref{eq:cv_Mt} to estimate $p_j$ by cross-validating a loss as described in $\S$ \ref{sec:pccv}. One may then set $U = M_{\bar{t}}$ where $\bar{t} = \frac{1}{|J|}\sum_{j\in J} t_j$. Similarly, one may set $V = M_{\bar{c}}$ by cross-validating $M_{c_j}$'s for $q_j$'s.
  \item Let $s=|U|$, draw $\omega_i \stackrel{iid}{\sim} \calN(0,\sigma^{-2}I_s)$, $b_i \stackrel{iid}{\sim} \Unif[0,2\pi]$ for $i\in\{1,\ldots,D\}$; keep the set $\{(\omega_i,b_i)\}_{i=1}^D$ fixed henceforth.
  \item Let $\{\alpha_1,\ldots,\alpha_r\}=V$. Generate the data-set of random kitchen sink features, output projection coefficient vector pairs $\{(z(\avec_U(\hP_i)),\avec_V(\hQ_i))\}_{i=1}^N$. Let $\hPsi = (\bZ^T\bZ)^{-1}\bZ^T\bY_{1:r} \in\R^{D\times r} $ where $\bZ=[z(\avec_U(\hP_1))\cdots z(\avec_U(\hP_N)) ]^T \in\R^{N \times D} $, $\bY_{1:r}=[\vY_{\alpha_1},\ldots,\vY_{\alpha_r}]\in\R^{N\times r}$. Note that $\bZ^T\bY_{1:r}$ and $\bZ^T\bZ$ can be computed efficiently using parallelism.
  \item For all future query input functional observations $\hP_0$, estimate the projection coefficients of the corresponding output function as $\hat{f}_{1:r}(\vp_0) = \hPsi^T z(\avec_U(\vp_0))$.
\end{enumerate}

\section{Theory}
\label{theory}
We analyze the $L_2$ risk for the 3BE estimator below. We assume that input/output functions belong to a Sobolev Ellipsoid function class and that the mapping between input and output functions is in $\calF_\sigma$ \eqref{eq:func_class_set}.

\subsection{Assumptions}
\subsubsection{Sobolev Ellipsoid Function Classes}
We shall make a Sobolev ellipsoid assumption for classes $\calI$ and $\calO$.
Let $a(h)\equiv \{a_\alpha(h)\}_{\alpha\in\Z^d}$. Suppose that the projection
coefficients $a(p) = \{a_\alpha(p)\}_{\alpha\in \Z^l}$ and $a(q) =
\{a_\alpha(q)\}_{\alpha\in \Z^k}$ are as follows for $p\in\calI$, $q\in\calO$:
\begin{align}
\calI &= \{ p : a(p) \in \Theta_l(\nu_\subI,\gamma_\subI,\bA_\subI),\norm{p}_\infty \leq \bA_\subI \} \label{eq:sob-ellp-inp2}\\
\calO &= \{ q : a(q) \in \Theta_k(\nu_\subO,\gamma_\subO,\bA_\subO),\norm{q}_\infty \leq \bA_\subO \} \label{eq:sob-ellp-inp}
\end{align}
where $\nu_\subI,\gamma_\subI\in \R_{++}^l$, $\nu_\subO,\gamma_\subO\in \R_{++}^k$, $A_\subI,A_\subO\in \R_{++}$, $\R_{++}=(0,\infty)$, and
\begin{align}
    \Theta_d(\nu,\gamma,\bA) &= \Big\{\{a_\alpha\}_{\alpha\in \Z^d} : \sum_{\alpha\in\Z^d} a_\alpha^2 \kappa_\alpha^2(\nu,\gamma) < \bA \Big\} \\
\kappa_\alpha^2(\nu,\gamma) &= \sum_{i=1}^d(\nu_i|\alpha_i|)^{2\gamma_i}\ \mathrm{for}\ 
\nu_i,\gamma_i,\bA > 0.
\end{align}
See \cite{ingster2011estimation,laurent1996efficient} for other work using
similar Sobolev elipsoid assumptions. The assumption in
\eqref{eq:sob-ellp-inp} will control the
tail-behavior of projection coefficients and allow one to effectively estimate
$p\in \calI$ and $q\in \calO$ using a finite number of projection coefficients
on the empirical functional observation.

Suppose as before that function
$h$ is such that $a(h)\in\Theta_d(\nu,\gamma,\bA)$ has a corresponding set of evaluations $H =
\{y_j = h(u_j)+ \epsilon_j\}_{j=1}^r$ where $u_j\iid \Unif([0,1]^d)$ and
$\EE{\epsilon_j}=0$, $\EE{\epsilon_j^2}<\infty$. Then, $\tilde{h}$, the estimate of $h$, is:
\begin{align}
\tilde{h}(x) &= \sum_{\alpha\ :\ \kappa_\alpha(\nu,\gamma)\leq t}a_\alpha(H)\varphi_\alpha(x) \quad \where \\
a_\alpha(H) &= \frac{1}{r}\sum_{j=1}^{r} y_j\varphi_\alpha(u_j) \label{eq:coef-est-th}.
\end{align}
Choosing $t$ optimally\footnote{See appendix for details.} can be shown to lead
to $\E[\norm{\tilde{h}-h}_2^2]=O(r^{-\frac{2}{2+\gamma^{-1}}})$, where
$\gamma^{-1}=\sum_{j=1}^{d}\gamma_j^{-1}$, $r \rightarrow \infty$. Thus, we can represent $h$ using a finite number of projection coefficients $\avec_t(H)=(a_\alpha(H) :\kappa_\alpha(\nu,\gamma)\leq t)^T$; this allows one to approximate the FFR problem as a regression problem over finite vectors $\avec_t(p)$ and $\avec_{t'}(q)$. Note that our choice of sets $M_t$ \eqref{eq:cv_Mt} in $\S$\ref{sec:pccv} corresponds to the estimator in \eqref{eq:coef-est-th} with $\nu,\gamma=\vec{1}$. Varying $t$ in this case will still be adaptive to the smoothness of $h$, and the number of points in $H$.

\subsubsection{Function to Funcion Mapping}
Recall that we take output functions to be:
\begin{align*}
q(x) = [f(p)](x) &= \sum_{\alpha\in\Z^k} f_\alpha(p)\varphi_\alpha(x)
\end{align*}
where $f_\alpha(p) = a_\alpha(f(p))$.
An our assumption of the class of mappings is:
\begin{align*}
\calF_\sigma &= \{f\ :\ \forall \alpha\in\Z^k\ \norm{\theta_{\alpha}}_1\leq B_{\alpha}, f_{\alpha} \text{ as in } \eqref{eq:q_form}  \}
\end{align*}
Suppose further that:
\begin{align}
\sum_{\alpha\in\Z^k} B_{\alpha}^2 \kappa_\alpha^2(\nu_\subO,\gamma_\subO)\leq \bA_\subO 
\label{eq:func_class}.
\end{align}
Hence, if $f \in \calF_\sigma$ then $q = f(p) \implies q\in\calO$ since $|f_\alpha(p)|\leq \norm{\theta_{\alpha}}_1\leq B_{\alpha}$ and \eqref{eq:func_class} holds.

\subsection{Risk Upperbound}
Below we state our main theorem, upperbounding the risk of the 3BE.
\begin{theorem}
Let a small constant $\delta>0$ be fixed. Suppose that $\hat{q}_0(x) = \sum_{\alpha \in M^\subO_u} \hat{f}_\alpha(\vp_0)\varphi_\alpha(x)$, $\hat{f}_\alpha(\vp_0)$ given by \eqref{eq:OLSest}. Furthermore, suppose that \eqref{eq:sob-ellp-inp} holds and $f \in \calF_\sigma$ as in \eqref{eq:func_class}. Moreover, assume that \eqref{eq:func_obs} holds and $n_i,m_i \asymp n$. Also, assume that the number of RKS features $D$ \eqref{eq:rks_feats} is taken to be $D\asymp n\log(n)$.  Then,
\begin{align}
&\EE{\norm{q_0-\hat{q}_0}^2_2} \\
&\leq O\left( \left(n^{-1/(2+\gamma_\subI^{-1})}+\frac{n\log(n)\log(N)}{N}\right)^{2/(2+\gamma_\subO^{-1})}\right) \label{eq:f2f_rate}\\
& \text{with probability at least }1-\delta. \nonumber
\end{align}
\end{theorem}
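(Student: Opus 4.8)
The plan is to exploit the orthonormality of $\{\varphi_\alpha\}$, so that by Parseval the $L_2$ risk splits \emph{exactly} into an output-side truncation term and a coefficient-estimation term. Writing $a_\alpha(q_0)=f_\alpha(p_0)$ and using that $\hat q_0$ is supported on $M^\subO_u$,
\[
\norm{q_0-\hat q_0}_2^2 = \sum_{\alpha\in M^\subO_u}\bigl(f_\alpha(p_0)-\hat f_\alpha(\vp_0)\bigr)^2 + \sum_{\alpha\notin M^\subO_u} a_\alpha(q_0)^2 .
\]
The second sum is a pure bias: since $f\in\calF_\sigma$ forces $q_0\in\calO$ with $a(q_0)\in\Theta_k(\nu_\subO,\gamma_\subO,A_\subO)$, I would bound it by $\sum_{\kappa_\alpha(\nu_\subO,\gamma_\subO)>u} a_\alpha(q_0)^2 \le u^{-2}\sum_\alpha a_\alpha(q_0)^2\kappa_\alpha^2(\nu_\subO,\gamma_\subO)\le A_\subO\, u^{-2}$. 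The remaining work is to control the aggregated coefficient error and balance it against $A_\subO u^{-2}$.

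For each active $\alpha$ I would split the error along the chain in \eqref{eq:lin_est_approx},
\[
f_\alpha(p_0)-\hat f_\alpha(\vp_0)=\bigl(f_\alpha(p_0)-\psi_\alpha^T z(\avec_\bsetI(\hP_0))\bigr)+\bigl((\psi_\alpha-\hpsi_\alpha)^T z(\avec_\bsetI(\hP_0))\bigr),
\]
an \emph{approximation} part and an \emph{OLS estimation} part. The approximation part collects two effects. First, replacing $\norm{g_{\alpha i}-p_0}_2$ by $\norm{\avec_\bsetI(g_{\alpha i})-\avec_\bsetI(\hP_0)}_2$ costs, by Lipschitzness of the RBF kernel $\Ks$ and the triangle inequality, at most $\mathrm{Lip}(\Ks)(\norm{g_{\alpha i}-\tilde g_{\alpha i}}_2+\norm{p_0-\tp_0}_2)$ per summand; weighting by $\theta_{\alpha i}$ and using $\norm{\theta_\alpha}_1\le B_\alpha$ bounds this by $B_\alpha\,\mathrm{Lip}(\Ks)\,\norm{p_0-\tp_0}_2$ (plus the analogous $g$ term), which enters at the root-mean-square input scale $n^{-1/(2+\gamma_\subI^{-1})}$ (indeed $\E\norm{p_0-\tp_0}_2^2=O(n^{-2/(2+\gamma_\subI^{-1})})$ dominates this). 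Second, the RKS step $\Ks\approx z(\cdot)^T z(\cdot)$ of \eqref{eq:rks_feats} contributes the Random-Kitchen-Sinks approximation error, made uniformly small by taking $D\asymp n\log(n)$. Both are scaled by $B_\alpha$, so after summing over $\alpha\in M^\subO_u$ the constraint $\sum_\alpha B_\alpha^2\kappa_\alpha^2(\nu_\subO,\gamma_\subO)\le A_\subO$ from \eqref{eq:func_class} keeps the total finite.

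The hard part is the OLS estimation term. I would condition on the random features $\{(\omega_i,b_i)\}$ and the designs $\{\avec_\bsetI(\hP_i)\}$ and analyze $\hpsi_\alpha=(\bZ^T\bZ)^{-1}\bZ^T\vY_\alpha$ of \eqref{eq:OLSest} as a noisy, slightly misspecified linear regression: the targets $a_\alpha(\hQ_i)$ equal $\psi_\alpha^T z(\avec_\bsetI(\hP_i))$ up to the approximation error above plus the zero-mean coefficient noise induced by $\xi_{ij}$, whose variance is controlled by $\varsigma$ through \eqref{eq:coef-est}. The key step is a high-probability lower bound on the smallest eigenvalue of $\tfrac1N\bZ^T\bZ$ via a matrix concentration inequality (using that $\norm{z(\cdot)}_2\le\sqrt2$ is bounded); this is what produces the $1-\delta$ confidence and, together with a union bound over the $r$ active coefficients, supplies the $\log(N)$ factor. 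The resulting per-coefficient mean-squared OLS error then scales like $\varsigma\,D/N$ against $\norm{z(\avec_\bsetI(\hP_0))}_2^2\le 2$, i.e.\ $O\!\left(\tfrac{n\log(n)\log(N)}{N}\right)$. I expect this to be the main obstacle: simultaneously handling the random (noisy, finite-basis) design points, the misspecification of the RKS linear model, and the randomness of the query embedding $\avec_\bsetI(\hP_0)$, while keeping the eigenvalue and noise concentration uniform over all $r$ coefficients.

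Finally I would assemble the two per-coefficient pieces into a single squared-error budget $\zeta \asymp n^{-1/(2+\gamma_\subI^{-1})}+\tfrac{n\log(n)\log(N)}{N}$, and use $|M^\subO_u|=r\asymp u^{\gamma_\subO^{-1}}$ (the number of lattice points inside the output ellipsoid of radius $u$, where $\gamma_\subO^{-1}=\sum_j\gamma_{\subO,j}^{-1}$), so that the coefficient-estimation sum is $O(u^{\gamma_\subO^{-1}}\zeta)$. Combining with the truncation bias gives $\EE{\norm{q_0-\hat q_0}_2^2} = O\!\left(u^{-2}+u^{\gamma_\subO^{-1}}\zeta\right)$, and optimizing over $u$ (namely $u\asymp\zeta^{-1/(2+\gamma_\subO^{-1})}$) yields the stated rate $O\!\left(\zeta^{2/(2+\gamma_\subO^{-1})}\right)$, holding on the same high-probability event furnished by the eigenvalue concentration.
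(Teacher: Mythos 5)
Your proposal follows essentially the same route as the paper: the Parseval split into truncation bias plus per-coefficient error, the $A_\subO u^{-2}$ tail bound, the lattice count $|M^\subO_u|=O(u^{\gamma_\subO^{-1}})$, and the optimization over $u$ are exactly the paper's Lemma \ref{lm:q_bnd}, while your per-coefficient analysis is a sketch of the paper's Lemma \ref{lm:R_bnd}. The only difference is that the paper outsources that per-coefficient rate to the function-to-real regression analysis of \cite{oliva2013fast}, whereas you sketch its ingredients (kernel-argument Lipschitz swap, RKS approximation, OLS eigenvalue concentration) directly; your sketch is consistent with what that citation supplies.
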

See appendix for proof. The rate \eqref{eq:f2f_rate} yields consistency for our estimator if $n\log(n) = o(N/\log(N))$; that is, so long as one is in the large data-set domain where the number of instances is larger than the number of points in function observations. Note that the first summand in \eqref{eq:f2f_rate} is similar to typical functional estimation rates, and it stems from our approximation with bases; the second summand is akin to a linear regression rate, and it stems from our OLS estimation \eqref{eq:OLSest}.


\section{Experiments}
\label{experiments}
Below we show the improvement of the 3BE over previous FFR approaches in several real-world data-sets. Empirically, the 3BE proves to be the most general, quickest, and effective estimator. Unlike previous time-series FFR approaches, the 3BE easily lends itself to working over distributions. Moreover, unlike previous nonparametric FFR estimators the 3BE does not need to compute pairwise kernel evaluations, making it much more scalable. All differences in MSE were statistically significant ($p < 0.05$) using paired t-tests.

\subsection{Rectifying 2LPT Simulations}
Numerical simulations have become an essential tool to study cosmological structure formation. 
Astrophysics use N-body simulations \cite{trac2006out} to study the gravitational evolution of collisionless particles like dark matter particles. Unfortunately, N-body simulations require forces among particles to be recomputed over multiple time intervals, leading to a large magnitude of time steps to complete a single simulation. In order to mitigate the large computational costs of running N-body simulations, often simulations based on Second Order Lagrange Perturbation Theory (2LPT) \cite{scoccimarro1998transients} are used. 
\begin{figure}
\label{fig:part}
  \centering
  \includegraphics[width=.45\textwidth]{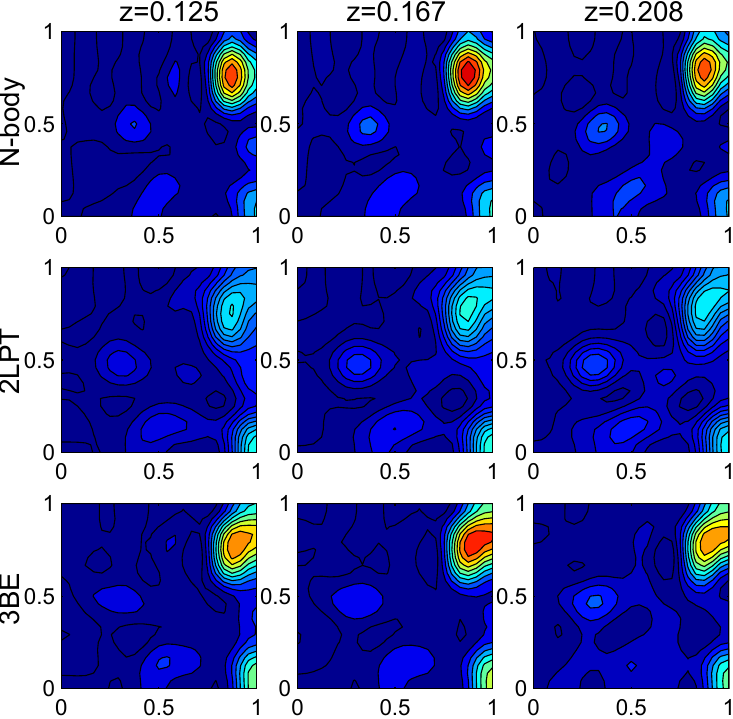}
    \caption{Slices of particle pdfs.}
\end{figure}
Although 2LPT simulations are several orders of magnitude faster, they prove to be inaccurate, especially at smaller scales. In this experiment we bridge the gap between the speed of 2LPT simulations and the accuracy of N-body simulations using FFR and the 3BE. Namely, we regress the mapping between a distribution of particles in an area coming from a 2LPT simulation and the distribution of the particles in the same area under an equivalent N-body simulation.

\begin{wrapfigure}{r}{0.3\textwidth}
\small
\centering
\begin{tabular}{l{c}{c}}
Method         	& MSE  		& MPT\\
\hline
\textbf{3BE}    & \textbf{4.958}  	& \textbf{0.009}\\
LSE            	& 6.816		& 4.977\\
2LPT           	& 6.424  	& NA\\
AD				& 9.289		& NA
\end{tabular}
\captionof{table}{\small \label{tbl:phys_res}MSE and MPT(s) results.}\label{tbl:astro}
\end{wrapfigure}
We regress the distribution of 3d (spatial) N-body simulation particles in $16\ \mathrm{Mpc}^3$ cubes when given the distribution of particles of the 2LPT simulation in the same cube (note that each distribution is estimated through the set of particles in each cube). A training-set of over 900K pairs of 2LPT cube sample-set/N-body cube sample-set instance was used, along with a test-set of 5K pairs. The number of projection coefficients used to represent input and output distributions was 365/401 respectively, chosen by cross-validating the density estimates. We chose the number of RKS features to be 15K based on rules-of-thumb. We cross-validated the $\sigma$ and $\lambda$ parameters of the ridge variant 3BE \eqref{eq:ridgeest} and the smoothing parameter of the LSE and reported back the MSE and mean prediction time (MPT, in seconds) of our FFR estimates to the distributions truly coming directly through N-body simulation (Table \ref{tbl:astro}); we also report the MSE of predicting the average output distribution (AD).

We see that the 3BE is about $500\times$ faster than the LSE in terms of prediction time and achieved an improvement in $R^2$ of over $50\%$ over using the distribution coming directly from the 2LPT simulation (2LPT). Note also that the LSE does not achieve an improvement in MSE over 2LPT.


\subsection{Time-series Data}
We compared the performance of the 3BE in time-series prediction problems to using the LSE and widely used time-series prediction methods like
Dynamics Mining with Missing values (DynaMMo) \cite{li2009dynammo} and Kernel Embedded HMMs (SHMM) \cite{song2010hilbert}. 
DynaMMo is a latent-variable
probabilistic model trained with EM aimed at predicting
data that is missing in chunks and not just in a single time-step (as we also
attempt with our functional responses).

\subsubsection{Forward Prediction with Music Data}
\begin{figure}
\centering
\includegraphics[width=.45\textwidth]{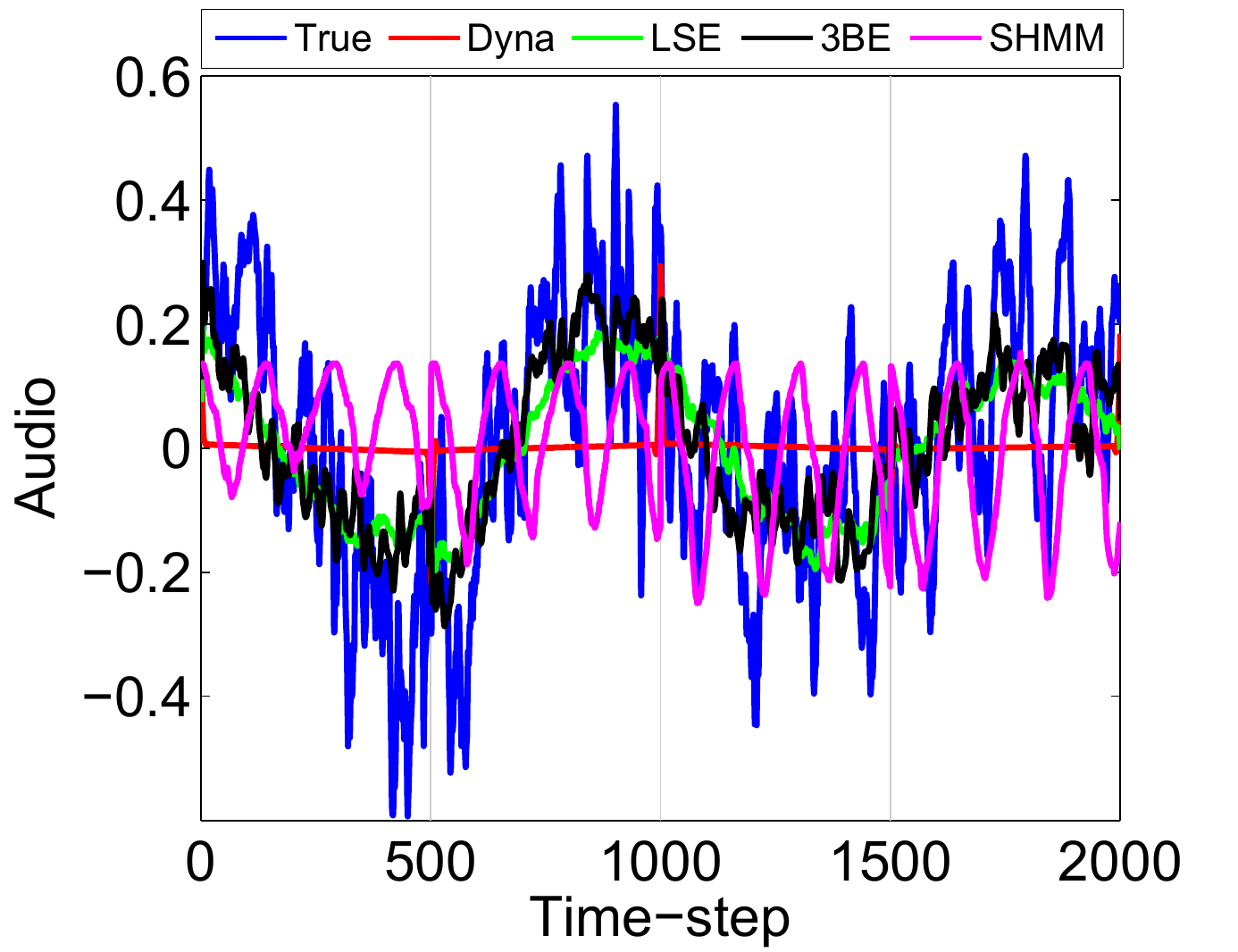}
\caption{\label{fig:mus_res}Example audio predictions; segments separated with vertical lines.}
\end{figure}
Music data presents a particularly interesting application of forward
prediction for time-series. That is, given a short segment of audio data from a
piece of music, can we predict the audio data in the short segment that
follows? Uses for forward prediction with music include compression and music similarity.

In this experiment, we use a 30 second clip, sampled at 44.1 kHz from the song
``I Turn To You'' by the artist Melanie C. We extract a mono signal of the sound clip and use the first $85\%$ for
training and hold-out, and the final $15\%$ for testing. To perform forward
prediction in the test set, we take a $500$ time-step segment of the (true)
music time-series as input and use it to predict the following $500$
time-steps. We repeat this sequentially over consecutive disjoint segments in
the test set until we have made predictions for the entire test set. In total our data-set consisted of about $2,200$ training instances. For audio
prediction with the 3BE we use the ridge variant \eqref{eq:ridgeest}. We use
150 trigonometric basis functions for both input and output functions, and 5000
RKS basis functions (both quantities chosen via rules of thumb). We then
cross-validate the bandwidth and $\lambda$ penalty parameters.

\begin{wrapfigure}{r}{0.2\textwidth}
\small
\centering
\begin{tabular}{l{c}}
Method         & MSE  \\
\hline
3BE 		    & 0.0327   \\
LSE            & 0.0351  \\
Dyna           & 0.0492  \\
SHMM			& 0.1082
\end{tabular}
\captionof{table}{\small \label{tbl:mus_res}Audio MSE.}
\end{wrapfigure}
We cross-validated the number of dimensions for hidden-states for DynaMMo, and
the bandwidth parameter for the LSE. The mean squared error (MSE) on the
test-set is reported in Table \ref{tbl:mus_res} for each method.
The 3BE achieves the lowest estimation error. Furthermore, looking at Figure
\ref{fig:mus_res} it is apparent that the 3BE outperforms the other methods in
terms of capturing the structure of the audio data. The quality of the audio
predicted with the 3BE is also superior to the other methods (hear predicted sound clips in supplemental
materials). 
Furthermore, DynaMMo
takes over 4 hours to learn a model given a fixed hidden state dimensionality
with no missing data (and even longer if also predicting missing data), where
as the 3BE takes only about 2 minutes to cross-validate and perform predictions
(a speed-up of over $7000\times$). Similarly the 3BE was over $5000\times$ faster than SHMM for predictions. Additionally, even though the data-set is of a smaller scale, the 3BE still enjoys a $3\times$ speedup over LSE for prediction time.


\subsubsection{Co-occurring Predictions with Joint Motion Capture Data}
\begin{figure}
  \centering
  \includegraphics[width=.45\textwidth]{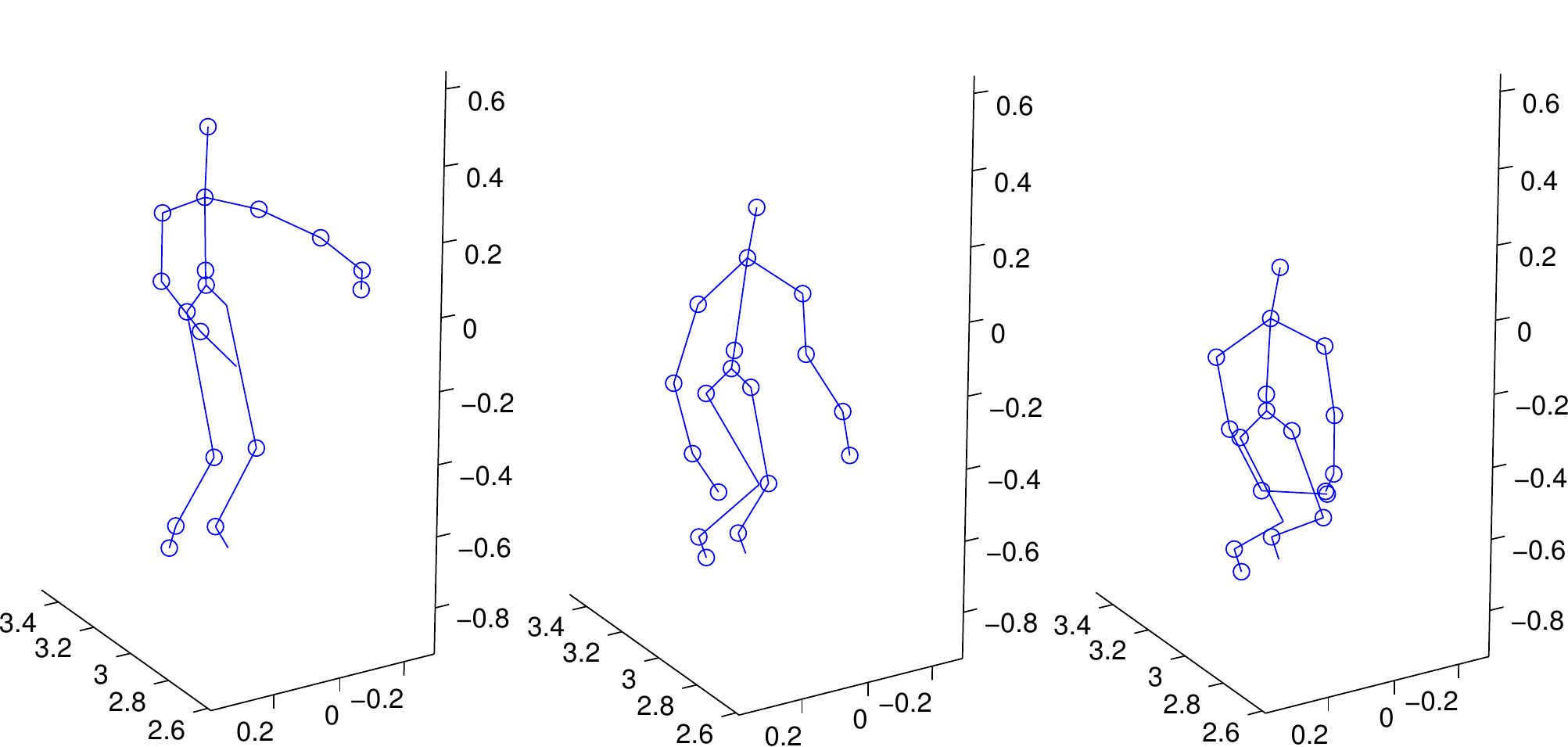}
\caption{\label{fig:keyframes} Example ``duck'' frames.}
\end{figure}
Next, we explore predicting co-occurring time-series with motion capture
(MoCap) data. 
We use the MSRC-12 Data-set
\cite{fothergill2012instructing}.
The 3d positions are provided for 20 total joints. 
We look to predict the time-series of the
position of an unobserved joint over a $T$ time-step segment given 
time-series data (one function for each joint's x, y, or z position) for $R$
observed joints for the segment.
\begin{figure}
  \centering
  \includegraphics[width=.3\textwidth]{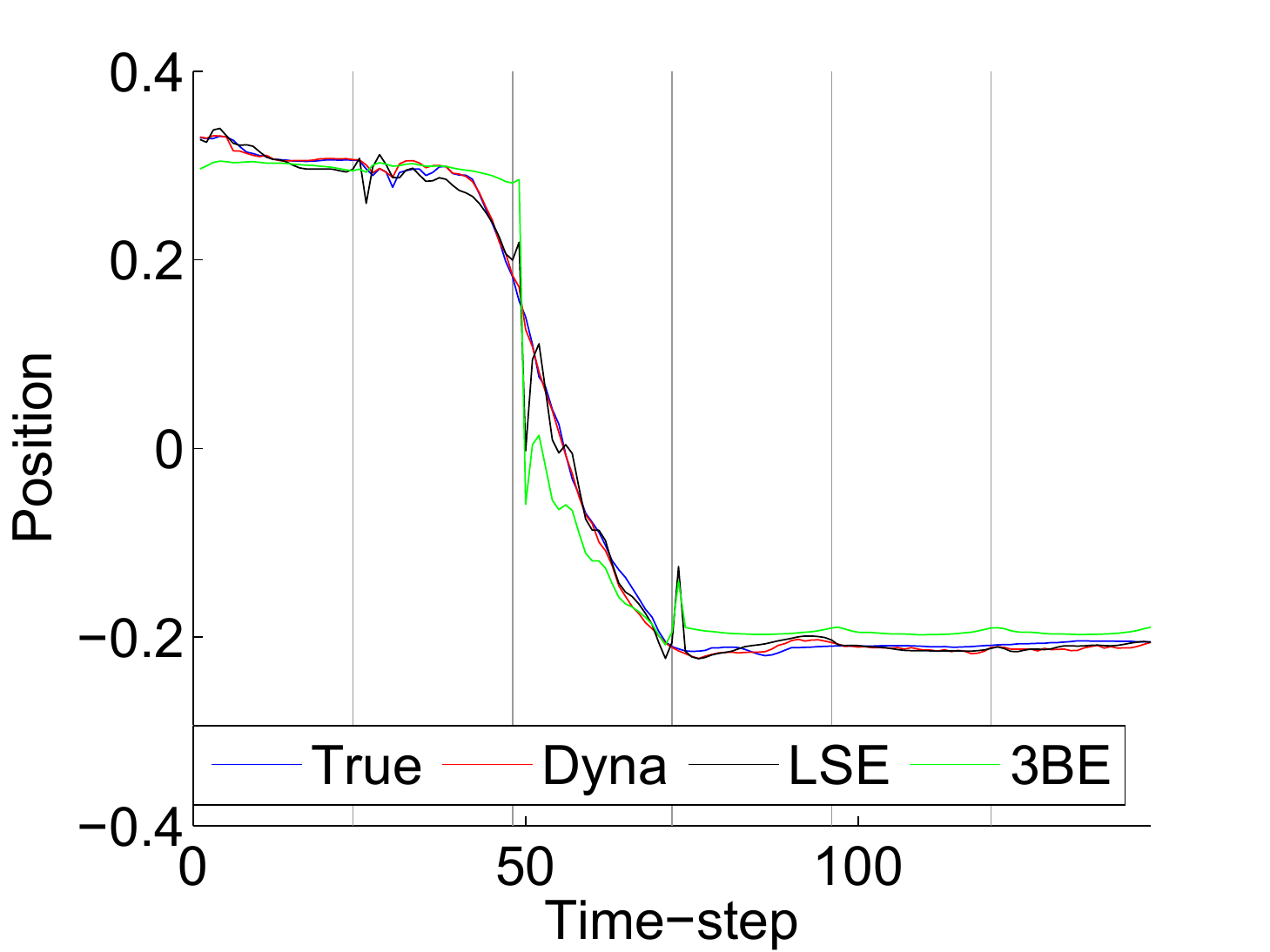}
  \caption{\label{fig:mocap_res}Occluded joint predictions.}
\end{figure}

We performed co-occurring time-series prediction with MoCap data of a subject
performing the gesture ``duck'' (Figure \ref{fig:keyframes}). We randomly chose
10 joints to designate as occluded, and used the other 10 as our non-occluded
joints. We then solved 30 separate FFR problems, where each of the problems had
one of the missing joints' time-series as the output response function (e.g.
missing joint 1's y position or missing joint 4's x position). In each of the
problems, the 30 functions corresponding to the time-series for non-occluded
joint spatial positions were used as inputs (by concatenating the projection coefficients of each input function)
. We considered segments of 24
time-steps for time-series functions. In total we used a training set of about 1100 instances. The number of projection coefficients for
functions was taken to be 10 while the number of RKS features was 250. The same
parameters for all estimators were cross validated as before.

\begin{wrapfigure}{r}{0.2\textwidth}
\small
\centering
\begin{tabular}{l{c}}
Method         & MSE  \\
\hline
3BE 		    & 7.78E-4   \\
LSE            & 1.3E-3  \\
Dyna           & 2.40E-4  \\
\end{tabular}
\captionof{table}{\label{tbl:mocap_res}MoCap MSE.}
\end{wrapfigure}

DynaMMo performs the best (Table \ref{tbl:mocap_res}), which is perhaps not
surprising given that MoCap occlusion prediction was a point of emphasis for
DynaMMo. However, the differences in prediction qualities among the different
methods is not as pronounced in this data-set (Figure \ref{fig:mocap_res}). We
again see a speed up of over 1000x using 3BE over DynaMMo, also there was a speed up of over $30\times$ in prediction time over LSE.

\section{Conclusion}
\label{conclusion}
In conclusion, this paper presents a new estimator, the Triple Basis Estimator
(3BE), for performing function to function regression in a scalable manner.
Since functional data is complex, it is important to have an estimator that is
capable of using massive data-sets in order to achieve a low estimation risk.
To the best of our knowledge, the 3BE is the first nonparametric FFR estimator that is capable to scaling to big data-sets. The 3BE achieves this through the use of a basis representation of input and
output functions and random kitchen sink basis functions. We analyzed the risk
of the 3BE given non-parametric assumptions. Furthermore, we showed 
an improvement of several orders of magnitude for prediction speed and a reduction in error over previous estimators in various real-world data-sets.

\clearpage

\bibliography{main}

\begin{thebibliography}{10}

\bibitem{ferraty2006nonparametric}
F.~Ferraty and P.~Vieu.
\newblock {\em Nonparametric functional data analysis: theory and practice}.
\newblock Springer, 2006.

\bibitem{fothergill2012instructing}
Simon Fothergill, Helena Mentis, Pushmeet Kohli, and Sebastian Nowozin.
\newblock Instructing people for training gestural interactive systems.
\newblock In {\em Proceedings of the 2012 ACM annual conference on Human
  Factors in Computing Systems}, pages 1737--1746. ACM, 2012.

\bibitem{ingster2011estimation}
Y.~Ingster and N.~Stepanova.
\newblock Estimation and detection of functions from anisotropic sobolev
  classes.
\newblock {\em Electronic Journal of Statistics}, 5:484--506, 2011.

\bibitem{kadri2010nonlinear}
Hachem Kadri, Emmanuel Duflos, Philippe Preux, St{\'e}phane Canu, Manuel Davy,
  et~al.
\newblock Nonlinear functional regression: a functional rkhs approach.
\newblock In {\em JMLR Workshop and Conference Proceedings}, volume~9, pages
  374--380, 2010.

\bibitem{laurent1996efficient}
B.~Laurent.
\newblock Efficient estimation of integral functionals of a density.
\newblock {\em The Annals of Statistics}, 24(2):659--681, 1996.

\bibitem{li2009dynammo}
Lei Li, James McCann, Nancy~S Pollard, and Christos Faloutsos.
\newblock Dynammo: Mining and summarization of coevolving sequences with
  missing values.
\newblock In {\em Proceedings of the 15th ACM SIGKDD international conference
  on Knowledge discovery and data mining}, pages 507--516. ACM, 2009.

\bibitem{oliva2013fast}
Junier~B Oliva, Willie Neiswanger, Barnabas Poczos, Jeff Schneider, and Eric
  Xing.
\newblock Fast distribution to real regression.
\newblock {\em AISTATS}, 2014.

\bibitem{olivadistribution}
Junier~B Oliva, Barnab{\'a}s P{\'o}czos, and Jeff Schneider.
\newblock Distribution to distribution regression.
\newblock {\em ICML}, 2013.

\bibitem{oliva2013fusso}
Junier~B Oliva, Barnabas Poczos, Timothy Verstynen, Aarti Singh, Jeff
  Schneider, Fang-Cheng Yeh, and Wen-Yih Tseng.
\newblock Fusso: Functional shrinkage and selection operator.
\newblock {\em AISTATS}, 2014.

\bibitem{rahimi2007random}
Ali Rahimi and Benjamin Recht.
\newblock Random features for large-scale kernel machines.
\newblock In {\em Advances in neural information processing systems}, pages
  1177--1184, 2007.

\bibitem{ramsay2006functional}
James~O Ramsay and B.W. Silverman.
\newblock {\em Functional data analysis}.
\newblock Wiley Online Library, 2006.

\bibitem{ramsay2002applied}
J.O. Ramsay and B.W. Silverman.
\newblock {\em Applied functional data analysis: methods and case studies},
  volume~77.
\newblock Springer New York:, 2002.

\bibitem{scoccimarro1998transients}
Roman Scoccimarro.
\newblock Transients from initial conditions: a perturbative analysis.
\newblock {\em Monthly Notices of the Royal Astronomical Society},
  299(4):1097--1118, 1998.

\bibitem{song2010hilbert}
Le~Song, Byron Boots, Sajid~M Siddiqi, Geoffrey~J Gordon, and Alex~J Smola.
\newblock Hilbert space embeddings of hidden markov models.
\newblock In {\em Proceedings of the 27th international conference on machine
  learning (ICML-10)}, pages 991--998, 2010.

\bibitem{trac2006out}
Hy~Trac and Ue-Li Pen.
\newblock Out-of-core hydrodynamic simulations for cosmological applications.
\newblock {\em New Astronomy}, 11(4):273--286, 2006.

\end{thebibliography}
\bibliographystyle{plain}


\clearpage
\section*{Appendix}
\subsection*{Multidimensional Projection Series Functional Estimation}
Let $y_{ij}=p_i(u_{ij})+\epsilon_{ij}$. Given $\vp_i$ as in \eqref{eq:func_obs} our estimator for $p_i\in \calI$ \eqref{eq:sob-ellp-inp} will be:
\begin{align}
\tp_i(x) =& \sum_{\alpha\ :\ \kappa_\alpha(\nu,\gamma)\leq t}a_\alpha(\vp_i)\varphi_\alpha(x) \quad \where \\
a_\alpha(\vp_i) =& \frac{1}{n_i}\sum_{j=1}^{n_i} y_{ij}\varphi_\alpha(u_{ij}) .
\end{align}
For readability let
$(\nu,\gamma,\bA)=(\nu_\subI,\gamma_\subI,\bA_\subI)$. First, note that:
\begin{align}
&\EE{\norm{p_i-\tp_i}_2^2}\\
=& \EE{\Norm{ \sum_{\alpha\in\Z} a_{\alpha}(p_i)\varphi_{\alpha} -\sum_{\alpha\in M^\subI_t} a_{\alpha}(\vp_i) \varphi_{\alpha}}_2^2} \nonumber\\
=& \E\Bigg[\int_{\Lambda^l} \Bigg(\sum_{\alpha\in M^\subI_t} (a_{\alpha}(p_i)-a_{\alpha}(\vp_i))\varphi_{\alpha}(x) \nonumber \\ &+\sum_{\alpha\in (M^\subI_t)^c} a_{\alpha}(p_i) \varphi_{\alpha}(x)\Bigg)^2 \ud x\Bigg] \nonumber \\
=& \E \Bigg[\int_{\Lambda^l} \sum_{\alpha\in M^\subI_t} \sum_{\rho\in M^\subI_t} (a_{\alpha}(p_i)-a_{\alpha}(\vp_i)) \nonumber \\
&\qquad\quad (a_{\rho}(p_i)-a_{\rho}(\vp_i)) \varphi_{\alpha}(x)\varphi_{\rho}(x) \ud x\Bigg] \nonumber\\
&+ 2\E \Bigg[\int_{\Lambda^l} \sum_{\alpha\in M^\subI_t} \sum_{\rho\in (M^\subI_t)^c} (a_{\alpha}(p_i)-a_{\alpha}(\vp_i)) \nonumber \\
&\qquad \qquad a_{\rho}(p_i) \varphi_{\alpha}(x)\varphi_{\rho}(x) \ud x\Bigg]\nonumber\\
&+ \E \Bigg[ \int_{\Lambda^l} \sum_{\alpha\in (M^\subI_t)^c} \sum_{\rho\in (M^\subI_t)^c} \nonumber\\ 
&\qquad \qquad a_{\alpha}(p_i)a_{\rho}(p_i)\varphi_{\alpha}(x)\varphi_{\rho}(x) \ud x \Bigg] \nonumber \\
=& \EE{\sum_{\alpha\in M^\subI_t} (a_{\alpha}(p_i)-a_{\alpha}(\vp_i))^2 }+\EE{\sum_{\alpha\in (M^\subI_t)^c} a_{\alpha}^2(p_i) } \label{eq:derisk},
\end{align}
where the last line follows from the orthonormality of $\{\varphi \}_{\alpha\in\Z}$.
Furthermore, note that $\forall p_i\in \calI$:
\begin{align}
\sum_{\alpha\in (M^\subI_t)^c} a_{\alpha}^2(p_i) =& \frac{1}{t^2} \sum_{\alpha\in (M^\subI_t)^c} t^2a_{\alpha}^2(p_i) \\
\leq& \frac{1}{t^2} \sum_{\alpha\in\Z} \kappa^2_\alpha(\nu,\gamma)a_{\alpha}^2(p_i) 
\leq \frac{\bA}{t^2} \label{eq:turn_err}.
\end{align}
Also,
\begin{align*}
\EE{(a_{\alpha}(p_i)-a_{\alpha}(\vp_i))^2 } =& \left(\EE{a_{\alpha}(\vp_i)}-a_{\alpha}(p_i)\right)^2 \\
&+ \Var\left[a_{\alpha}(\vp_i)\right].
\end{align*}
We may see that $a_{\alpha}(\vp_i)$ is unbiased. Let $u\sim U([0,1]^l)$, then: 
\begin{align*}
&a_{\alpha}(p_i) \\
&= \int_{\Lambda^l} \varphi_\alpha(z)p_i(z)(1)\ud z = \EE{\varphi_\alpha(u)p_i(u)}\\
&= \EE{\frac{1}{n_i}\sum_{j=1}^{n_i} p(u_{ij})\varphi_\alpha(u_{ij})} \\
&= \EE{\frac{1}{n}\sum_{j=1}^{n} p(u_{ij})\varphi_\alpha(u_{ij})} + \EE{\frac{1}{n}\sum_{j=1}^{n} \epsilon_{ij} \varphi_\alpha(u_{ij})} \\
&= \EE{\frac{1}{n_i}\sum_{j=1}^{n_i} y_{ij} \varphi_\alpha(u_{ij})} = \EE{a_{\alpha}(\vp_i)}
\end{align*}
Also,
\begin{align*}
\Var\left[a_{\alpha}(\vp_i)\right] =& \frac{1}{n_i^2}\sum_{j=1}^{n_i} \Var\left[y_{ij}\varphi_\alpha(u_{ij})\right]\\
\leq& \frac{1}{n_i^2}\sum_{j=1}^{n_i} \EE{(y_{ij}\varphi_\alpha(u_{ij}))^2}\\
\leq& \frac{\mxphi}{n_i^2}\sum_{j=1}^{n_i} \EE{(p_i(u_{ij})+\epsilon_{ij})^2}\\
=& \frac{\mxphi}{n_i^2}\sum_{j=1}^{n_i} \EE{p_i(u_{ij})^2+2p_i(u_{ij})\epsilon_{ij}+\epsilon_{ij}^2}\\
\leq& \frac{\mxphi (\bA^2+\varsigma^2) n_i}{n_i^2}
=O(n_i^{-1}),
\end{align*}
where $\mxphi \equiv \max_{\alpha\in\Z^l}\norm{\varphi_\alpha}_\infty$. Thus,
\begin{align*}
\EE{\norm{p_i-\tp_i}_2^2} \leq \frac{C_1|M^\subI_t|}{n_i} + \frac{C_2}{t^2}.
\end{align*}
First note that if we have a bound $\forall \alpha\in M^\subI_t,\ |\alpha_i|\leq c_i$ then $|M^\subI_t|\leq \prod_{i=1}^l(2c_i+1)$, by a simple counting argument. Let $\lambda=\mathrm{argmin}_i\nu_i^{2\gamma_i}$. For $\alpha\in M^\subI_t$ we have: 
\begin{align}
\sum_{i=1}^l|\alpha_i|^{2\gamma_i} \leq \frac{1}{\nu_\lambda^{2\gamma_\lambda}}\sum_{i=1}^l(\nu_i|\alpha_i|)^{2\gamma_i} = \frac{\kappa^2_\alpha(\nu,\gamma)}{\nu_\lambda^{2\gamma_\lambda}} \leq \frac{t^2}{\nu_\lambda^{2\gamma_\lambda}}, \label{eq:nterm_bnd1}
\end{align} 
and
\begin{align}
|\alpha_i|^{2\gamma_i} \leq \sum_{i=1}^l|\alpha_i|^{2\gamma_i} \leq {t^2}{\nu_\lambda^{-2\gamma_\lambda}} \implies |\alpha_i| \leq \nu_\lambda^{-\frac{\gamma_\lambda}{\gamma_i}}t^{\frac{1}{\gamma_i}}. \label{eq:nterm_bnd2}
\end{align}
Thus, $|M^\subI_t|\leq \prod_{i=1}^{l}(2\nu_\lambda^{-\frac{\gamma_\lambda}{\gamma_i}}t^{\frac{1}{\gamma_i}}+1)$. Thus, $|M^\subI_t|= O\left( t^{\gamma^{-1}} \right)$ where $\gamma^{-1}=\sum_{j=1}^{l}\gamma_j^{-1}$. 
Hence,
\begin{align*}
&\frac{\partial}{\partial t} \left[\frac{C_1t^{\gamma^{-1}}}{n_i} + \frac{C_2}{t^2}\right] = \frac{C_1't^{\gamma^{-1}-1}}{n_i} - C_2't^{-3}= 0 \implies \\
&t = C n^{\frac{1}{2+\gamma^{-1}}} \implies\\
&\EE{\norm{p_i-\tp_i}_2^2} \leq \frac{C_1|M^\subI_t|}{n_i} + \frac{C_2}{t^2} =O\left(n_i^{-\frac{2}{2+\gamma^{-1}}}\right). 
\end{align*}
A similar result may be reached for $\tq_i$ functions.

\section*{Theory}

\subsection*{Assumptions}
\begin{enumerate}[label=\textbf{A.\arabic*}]
\itemsep0em 
\item{ \label{asmp:sob}
{\em Sobolev Input/Output Functions.} Suppose that \eqref{eq:sob-ellp-inp2} \eqref{eq:sob-ellp-inp} hold.
}
\item{ \label{asmp:sob}
{\em FFR Mapping.} We shall assume that $f \in \calF_\sigma$ as in \eqref{eq:func_class_set}, \eqref{eq:func_class}.
}
\item{ \label{asmp:sampsize}
{\em Functional Observations } Suppose that \eqref{eq:func_obs} holds and $n_i,m_i \asymp n$. 
}
\item{ \label{asmp:rks}
{\em RKS Features } Suppose that the number of RKS features $D$ \eqref{eq:rks_feats} is taken to be $D\asymp n\log(n)$. 
}
\end{enumerate}

\begin{lemma}
\label{lm:q_bnd}
Let $\hat{q}_0(x) = \sum_{\alpha \in M^\subO_u} \hat{f}_\alpha(\vp_0)\varphi_\alpha(x)$. If $\forall \alpha\in\Z^k$, $\EE{(a_{\alpha}(q_0)-\hat{f}_\alpha(\vp_0))^2}=O(\calR(N,n))$, then $\EE{\norm{q_0-\hq_0}_2^2} = O\left(\calR(N,n)^{2/2+\gamma_\subO^{-1}}\right)$, where $\gamma_\subO^{-1}=\sum_{j=1}^{k}(\gamma_\subO)_j^{-1}$.
\end{lemma}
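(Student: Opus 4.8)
The plan is to reproduce the orthogonal-series risk decomposition already carried out for the input functions in the appendix (culminating in \eqref{eq:derisk}), but now treating the per-coefficient error as the abstract quantity $\calR(N,n)$ rather than the explicit $O(n_i^{-1})$ variance computed there. First I would use that $\{\varphi_\alpha\}_{\alpha\in\Z^k}$ is orthonormal to write $q_0 = \sum_{\alpha\in\Z^k} a_\alpha(q_0)\varphi_\alpha$ and split the squared $L_2$ distance into the coefficients that are actually estimated (those indexed by $M^\subO_u$) and the coefficients that are truncated away (those in $(M^\subO_u)^c$). Expanding the square and integrating, every cross term vanishes by orthonormality, leaving
\begin{align*}
\EE{\norm{q_0-\hq_0}_2^2} = \sum_{\alpha\in M^\subO_u} \EE{(a_\alpha(q_0)-\hat{f}_\alpha(\vp_0))^2} + \sum_{\alpha\in(M^\subO_u)^c} a_\alpha(q_0)^2.
\end{align*}

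Next I would bound the two summands separately. For the estimation term each summand is $O(\calR(N,n))$ by hypothesis, so all that remains is to count indices: reusing the cardinality argument of \eqref{eq:nterm_bnd1}--\eqref{eq:nterm_bnd2} with $(\nu_\subO,\gamma_\subO)$ in place of $(\nu_\subI,\gamma_\subI)$ gives $|M^\subO_u| = O(u^{\gamma_\subO^{-1}})$, whence the estimation term is $O(u^{\gamma_\subO^{-1}}\calR(N,n))$. For the tail term I would invoke $q_0\in\calO$ — which holds because $f\in\calF_\sigma$ together with \eqref{eq:func_class} forces $f(p_0)\in\calO$ — so that $\sum_\alpha a_\alpha(q_0)^2\kappa_\alpha^2(\nu_\subO,\gamma_\subO)<\bA_\subO$; since $\kappa_\alpha(\nu_\subO,\gamma_\subO)>u$ on the complement, the manipulation of \eqref{eq:turn_err} yields $\sum_{\alpha\in(M^\subO_u)^c} a_\alpha(q_0)^2 \leq \bA_\subO/u^2 = O(u^{-2})$.

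Combining gives $\EE{\norm{q_0-\hq_0}_2^2} = O(u^{\gamma_\subO^{-1}}\calR(N,n) + u^{-2})$. The final step is to balance these two contributions by choosing the output truncation level $u$: differentiating in $u$ and setting the derivative to zero gives $u\asymp \calR(N,n)^{-1/(2+\gamma_\subO^{-1})}$, and substituting back makes both terms of order $\calR(N,n)^{2/(2+\gamma_\subO^{-1})}$, which is exactly the claimed rate.

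I do not expect a genuine obstacle, since this is the standard bias/variance trade-off for a truncated orthogonal-series estimator and the heavy lifting (the per-coefficient rate $\calR(N,n)$) is assumed as a hypothesis. The one point requiring care is ensuring $q_0\in\calO$, so that the Sobolev-ellipsoid tail bound is legitimately applicable; this is precisely where the FFR-mapping and Sobolev assumptions enter. A secondary bookkeeping point is that both the cardinality bound and the tail bound must use the \emph{output} parameters $(\nu_\subO,\gamma_\subO,\bA_\subO)$ and the output dimension $k$, so that the exponent $\gamma_\subO^{-1}=\sum_{j=1}^k(\gamma_\subO)_j^{-1}$ appearing in the final rate is the correct one.
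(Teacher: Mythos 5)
Your proposal is correct and follows essentially the same route as the paper's proof: the orthonormal decomposition into estimated and truncated coefficients, the cardinality bound $|M^\subO_u|=O(u^{\gamma_\subO^{-1}})$ reused from the input-side argument, the Sobolev-ellipsoid tail bound $\bA_\subO/u^2$, and the choice $u\asymp\calR(N,n)^{-1/(2+\gamma_\subO^{-1})}$ to balance the two terms. Your added remarks on why $q_0\in\calO$ (via $f\in\calF_\sigma$ and \eqref{eq:func_class}) and on using the output parameters throughout are consistent with, and slightly more explicit than, what the paper writes.
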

\begin{proof}
Let $M^\subO_u$ be defined as in \eqref{eq:Mt}, note that: 
\begin{align}
\norm{q_0-\hq_0}_2^2 =& \sum_{\alpha\in M^\subO_u} (a_{\alpha}(q_0)-\hat{f}_\alpha(\vp_0))^2 \\
&+\sum_{\alpha\in (M^\subO_u)^c} a_{\alpha}^2(p_i)  \label{eq:pred_risk},
\end{align}
by the orthonormality of $\{\varphi_\alpha \}_{\alpha\in\Z}$ (see above).
Then,
\begin{align}
\EE{\norm{q_0-\hq_0}_2^2} \leq& \sum_{\alpha\in M^\subO_u} \EE{(a_{\alpha}(q_0)-\hat{f}_\alpha(\vp_0))^2} \\
&+\EE{\sum_{\alpha\in (M^\subO_u)^c} a_{\alpha}^2(q_0)}.
\end{align}
Furthermore, since $q_0\in\Theta_k(\nu_\subO,\gamma_\subO,\bA_\subO)$,
\begin{align}
\sum_{\alpha\in (M^\subO_u)^c} a_{\alpha}^2(q_0) =& \frac{1}{u^2} \sum_{\alpha\in(M^\subO_u)^c} u^2a_{\alpha}^2(q_0) \\
&\leq \frac{1}{u^2} \sum_{\alpha\in\Z} \kappa^2_\alpha(\nu_\subO,\gamma_\subO)a_{\alpha}^2(q_0) \leq \frac{\bA_\subO^2}{u^2} \label{eq:turnc_bnd}.
\end{align}
Thus,
$
\EE{\norm{q_0-\hq_0}_2^2} = O\left(\calR(N,n)|M^\subO_u|+\frac{\bA_\subO^2}{u^2}\right).
$

For simplicity of notation let
$(\nu,\gamma,\bA)=(\nu_\subO,\gamma_\subO,\bA_\subO)$. By an argument similar to 
\eqref{eq:nterm_bnd1} and \eqref{eq:nterm_bnd2} we have that $|M^\subO_u|= O\left( u^{\gamma^{-1}} \right)$ where $\gamma^{-1}=\sum_{j=1}^{l}\gamma_j^{-1}$.
Hence choosing $u\asymp
\calR(N,n)^{-1/(2+\gamma^{-1})}$ yields
$
\EE{\norm{q_0-\hq_0}_2^2} = O\left(\calR(N,n)^{2/2+\gamma_\subO^{-1}}\right).
$
\end{proof}

\begin{lemma}
\label{lm:R_bnd}
Let a small constant $\delta>0$ be fixed. Suppose that $\hat{f}_\alpha(\vp_0)$
is given by \eqref{eq:OLSest}. Then, asymptotically $\forall \alpha\in\Z^k$,
\begin{align*}
&\EE{(f_\alpha(p)-\hat{f}_\alpha(\vp_0))^2} = \\
& O\left(n^{-1/(2+\gamma_\subI^{-1})}+\max(\sfrac{1}{n},B_\alpha)\frac{n\log(n)\log(N)}{N}\right)\\
& \text{with probability at least }1-\delta.
\end{align*}
\end{lemma}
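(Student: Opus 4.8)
The plan is to split the pointwise squared error into an \emph{approximation} part and an \emph{estimation} part and bound each at its own rate. Writing $z_0 \equiv z(\avec_{\bsetI}(\hP_0))$ and recalling from \eqref{eq:lin_est_approx} that $\psi_\alpha = \sum_{i=1}^\infty \theta_{\alpha i} z(\avec_{\bsetI}(g_{\alpha i}))$, I would decompose (with $p_0$ the query function, so $f_\alpha(p)=f_\alpha(p_0)$)
\begin{align*}
f_\alpha(p_0) - \hat f_\alpha(\vp_0) = \underbrace{\big(f_\alpha(p_0) - \psi_\alpha^T z_0\big)}_{A} + \underbrace{(\psi_\alpha-\hpsi_\alpha)^T z_0}_{B},
\end{align*}
so that $\EE{(f_\alpha(p)-\hat f_\alpha(\vp_0))^2}\le 2\EE{A^2}+2\EE{B^2}$. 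The first summand of the claimed rate will come from $A$ and the second from $B$.

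To bound $\EE{A^2}$ I would introduce the intermediate quantity $f_\alpha^{(2)}\equiv \sum_i \theta_{\alpha i} K_\sigma\!\big(\norm{\avec_{\bsetI}(g_{\alpha i})-\avec_{\bsetI}(\hP_0)}_2\big)$ and split $A$ into (i) $f_\alpha(p_0)-f_\alpha^{(2)}$, the error of replacing the true $L_2$ kernel arguments $\norm{g_{\alpha i}-p_0}_2$ by their finite, estimated-coefficient counterparts, and (ii) $f_\alpha^{(2)}-\psi_\alpha^T z_0$, the error of replacing $K_\sigma$ by the Random Kitchen Sinks inner product. For (i), I would use that the RBF kernel is Lipschitz and takes values in $[0,1]$: each summand difference has magnitude at most $1$, so $(f_\alpha(p_0)-f_\alpha^{(2)})^2\le B_\alpha\,|f_\alpha(p_0)-f_\alpha^{(2)}|$, and $\EE{|f_\alpha(p_0)-f_\alpha^{(2)}|}$ is controlled by $\norm{\theta_\alpha}_1\le B_\alpha$ times the \emph{first} moment of the function-estimation error. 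By Jensen and the appendix bound $\EE{\norm{p_0-\tp_0}_2^2}=O(n^{-2/(2+\gamma_\subI^{-1})})$ (together with the deterministic Sobolev tail of each fixed $g_{\alpha i}$), this first moment is $O(n^{-1/(2+\gamma_\subI^{-1})})$ --- which is precisely where the square-root exponent of the first summand comes from. For (ii) I would invoke the uniform approximation guarantee of \cite{rahimi2007random} with $D\asymp n\log n$ features: on an event of probability at least $1-\delta/2$ the per-pair kernel error is $O(\sqrt{\log(\cdot)/D})$, so after $\theta_\alpha$-weighting and squaring it contributes only $O(B_\alpha^2\log(n)/D)=O(B_\alpha^2/n)$, which is absorbed. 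Hence $\EE{A^2}=O(n^{-1/(2+\gamma_\subI^{-1})})$.

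For $\EE{B^2}$ I would treat \eqref{eq:OLSest} as a (near) fixed-design linear regression of the estimated output coefficients $\vY_\alpha=(a_\alpha(\hQ_i))_i$ onto $\bZ$. Writing $\vY_\alpha=\bZ\psi_\alpha+\eta_\alpha$, the residual $\eta_\alpha$ collects (a) the zero-mean noise in the output coefficient estimates $a_\alpha(\hQ_i)$, whose variance is $O(1/m)=O(1/n)$, and (b) the per-instance misfit (the $A$-type approximation error evaluated at each $\hP_i$), of scale $O(B_\alpha)$; the effective noise level is therefore $\max(\sfrac1n,B_\alpha)$. Since $\hpsi_\alpha-\psi_\alpha=(\bZ^T\bZ)^{-1}\bZ^T\eta_\alpha$, we get $\EE{B^2}=\EE{\eta_\alpha^T\bZ(\bZ^T\bZ)^{-1}z_0 z_0^T(\bZ^T\bZ)^{-1}\bZ^T\eta_\alpha}$, which I would bound by the effective noise level times $\EE{z_0^T(\bZ^T\bZ)^{-1}z_0}$. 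Controlling the latter needs a matrix-concentration argument for the $D$-dimensional features over $N$ instances: on an event of probability at least $1-\delta/2$, $\bZ^T\bZ\succeq cN I$, whence $z_0^T(\bZ^T\bZ)^{-1}z_0=O(\norm{z_0}_2^2/N)=O(1/N)$ (using $\norm{z_0}_2\le\sqrt 2$), and the $\log N$ factor enters through the concentration/union bound needed to make this hold uniformly as the dimension $D\asymp n\log n$ grows. Combining gives $\EE{B^2}=O\!\big(\max(\sfrac1n,B_\alpha)\,\tfrac{D\log N}{N}\big)=O\!\big(\max(\sfrac1n,B_\alpha)\,\tfrac{n\log(n)\log(N)}{N}\big)$, and a union bound over the two events yields the overall probability $1-\delta$.

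The main obstacle is the estimation term $B$. Unlike textbook OLS, here the design $\bZ$ is random (the RKS embedding of \emph{noisy} function estimates $\hP_i$), its dimension $D\asymp n\log n$ grows with the problem, and the responses carry both genuine output-observation noise and a misspecification bias because the features are built from $\hP_i$ rather than the true $p_i$. The two delicate points are: (1) establishing the lower-eigenvalue bound $\bZ^T\bZ\succeq cN I$ with high probability for growing $D$, which forces $N\gtrsim D$ and supplies the $\log N$ factor; and (2) cleanly separating the errors-in-variables bias from the variance so that the effective noise scale is exactly $\max(\sfrac1n,B_\alpha)$ and no larger. Everything else --- the Lipschitz/boundedness manipulation for $A$ and the appendix's functional-estimation rate --- is routine once these are in hand.
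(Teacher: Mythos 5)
Your plan is sound, but be aware that the paper does not carry out this argument at all: its proof of Lemma~\ref{lm:R_bnd} is a two-line reduction. It observes that for each fixed $\alpha$ the estimator $\hat f_\alpha$ is exactly the function-to-real OLS-on-RKS-features estimator of \cite{oliva2013fast}, trained on responses $a_\alpha(\hQ_i)=f_\alpha(p_i)+\epsilon_{\alpha i}$ with $\EE{\epsilon_{\alpha i}}=0$ and $\Var[\epsilon_{\alpha i}]=O(\sfrac{1}{n})$, and then imports that paper's rate wholesale. What you have written is essentially a reconstruction of the internals of that imported result: the approximation/estimation split, the Lipschitz-plus-first-moment device explaining why the first summand carries the exponent $-1/(2+\gamma_\subI^{-1})$ rather than $-2/(2+\gamma_\subI^{-1})$, and the identification of the effective noise scale $\max(\sfrac{1}{n},B_\alpha)$ are exactly the right ingredients, so your route is more informative than the paper's even though it targets the same bound.

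One step, as written, would fail. For the normalized RKS features $z(x)=\sqrt{2/D}\,(\cos(\omega_1^Tx+b_1),\ldots,\cos(\omega_D^Tx+b_D))^T$ one has $\E[z(x)z(x)^T]\approx D^{-1}I$, so the strongest lower bound one can hope to establish is $\bZ^T\bZ\succeq c(N/D)I$, not $\bZ^T\bZ\succeq cNI$; consequently $z_0^T(\bZ^T\bZ)^{-1}z_0=O(D\norm{z_0}_2^2/N)=O(D/N)$ rather than $O(1/N)$. This correction is in fact what you need: your final display $\EE{B^2}=O\bigl(\max(\sfrac{1}{n},B_\alpha)\,D\log N/N\bigr)$ contains a factor of $D$ that does not follow from the $O(1/N)$ intermediate bound you state, so as written the sketch is internally inconsistent. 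With the corrected eigenvalue bound the $D\asymp n\log n$ factor appears where it should, and the remaining delicate points you flag (the errors-in-variables bias in $\eta_\alpha$ and the dependence of the design $\bZ$ on the noisy $\hP_i$) are genuinely the hard part --- they are precisely what the analysis in \cite{oliva2013fast} is invoked to handle.
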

\begin{proof}
Note that $\hat{f}_\alpha(\vp_0)$ is a function to real estimator attempting to
estimate the mapping $p_0\mapsto f_\alpha(p_0)$. Note that
$a_\alpha(q_i)=a_\alpha(\vq_i)+\epsilon_{\alpha i}$, where
$\EE{\epsilon_{\alpha}}=0$ and $\Var[\epsilon_\alpha]=O(\sfrac{1}{n})$ (see
above). Also,  $\hat{f}_\alpha(\vp_0)$ is trained using a data-set $\calD_\alpha =
\{(\vp_i,a_\alpha(\vq_i))\}_{i=1}^N = \{(\vp_i,a_\alpha(q_i)+\epsilon_{\alpha
i})\}_{i=1}^N$. Thus, a straightforward analogue (using general functions
rather than distributions) of the rate derived in \cite{oliva2013fast} yields
the result.
\end{proof}

\begin{theorem}
Let a small constant $\delta>0$ be fixed. Suppose that $\hat{q}_0(x) = \sum_{\alpha \in M^\subO_u} \hat{f}_\alpha(\vp_0)\varphi_\alpha(x)$, $\hat{f}_\alpha(\vp_0)$ given by \eqref{eq:OLSest}. Furthermore, suppose that \eqref{eq:sob-ellp-inp} holds and $f \in \calF_\sigma$ as in \eqref{eq:func_class}. Moreover, assume that \eqref{eq:func_obs} holds and $n_i,m_i \asymp n$. Also, assume that the number of RKS features $D$ \eqref{eq:rks_feats} is taken to be $D\asymp n\log(n)$.  Then,
\begin{align*}
&\EE{\norm{q_0-\hat{q}_0}^2_2}\\
&\leq O\left( \left(n^{-1/(2+\gamma_\subI^{-1})}+\frac{n\log(n)\log(N)}{N}\right)^{2/(2+\gamma_\subO^{-1})}\right)\\
& \text{with probability at least }1-\delta.
\end{align*}
\begin{proof}

Follows from Lemmas \ref{lm:q_bnd} and \ref{lm:R_bnd}.
\end{proof}
\end{theorem}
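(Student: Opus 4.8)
The plan is to chain the two lemmas proved above: Lemma~\ref{lm:R_bnd} supplies a per-coefficient error bound for each individual estimate $\hat{f}_\alpha$, and Lemma~\ref{lm:q_bnd} converts such a per-coefficient bound into an overall $L_2$ risk on the reconstructed output function by truncating the output basis at the optimal level. The first step is to observe that, since $q_0 = f(p_0)$ and $f_\alpha(p) = a_\alpha(f(p))$ by definition, we have $f_\alpha(p_0) = a_\alpha(q_0)$. Hence the quantity controlled by Lemma~\ref{lm:R_bnd}, namely $\EE{(f_\alpha(p_0)-\hat{f}_\alpha(\vp_0))^2}$, is exactly $\EE{(a_\alpha(q_0)-\hat{f}_\alpha(\vp_0))^2}$, which is precisely the hypothesis required by Lemma~\ref{lm:q_bnd}.

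The second step is to discharge the $\max(\sfrac{1}{n},B_\alpha)$ prefactor appearing in Lemma~\ref{lm:R_bnd}, so that the per-coefficient rate becomes uniform in $\alpha$. Here the mapping-class assumption \eqref{eq:func_class} does the work: since $\sum_{\alpha\in\Z^k} B_\alpha^2 \kappa_\alpha^2(\nu_\subO,\gamma_\subO) \leq \bA_\subO$ and $\kappa_\alpha(\nu_\subO,\gamma_\subO) \geq 1$ for $\alpha \neq 0$, each $B_\alpha$ is bounded by the absolute constant $\sqrt{\bA_\subO}$, so that $\max(\sfrac{1}{n},B_\alpha) = O(1)$ uniformly over $\alpha$. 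The per-coefficient bound of Lemma~\ref{lm:R_bnd} therefore collapses to
\begin{align*}
\EE{(a_\alpha(q_0)-\hat{f}_\alpha(\vp_0))^2} = O\!\left(\calR(N,n)\right), \quad \calR(N,n) \equiv n^{-1/(2+\gamma_\subI^{-1})}+\frac{n\log(n)\log(N)}{N},
\end{align*}
with the \emph{same} $\calR(N,n)$ for every $\alpha$.

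The final step is to invoke Lemma~\ref{lm:q_bnd} with this uniform $\calR(N,n)$, which immediately yields $\EE{\norm{q_0-\hat{q}_0}_2^2} = O\!\left(\calR(N,n)^{2/(2+\gamma_\subO^{-1})}\right)$, matching the claimed rate \eqref{eq:f2f_rate}. I expect the main obstacle to be not the algebraic chaining but reconciling the two modes of guarantee: Lemma~\ref{lm:R_bnd} holds with probability at least $1-\delta$, whereas Lemma~\ref{lm:q_bnd} is an in-expectation statement that sums over all $\alpha \in M^\subO_u$. The cleanest way to avoid paying a union bound over the $|M^\subO_u| = O(u^{\gamma_\subO^{-1}})$ reconstructed coordinates is to note that all output coordinates share a single random design $\bZ$ and the same RKS frequencies $\{(\omega_i,b_i)\}$; one conditions on the single high-probability event that $\bZ$ is well-conditioned, and then the per-coordinate bounds hold in expectation over the response noise $\epsilon_{\alpha i}$, so the $1-\delta$ guarantee is incurred only once. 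One must still verify that the truncation level $u \asymp \calR(N,n)^{-1/(2+\gamma_\subO^{-1})}$ selected inside Lemma~\ref{lm:q_bnd} does not interact with this conditioning in a way that degrades the stated rate, and that the $\log(N)$ already present in $\calR(N,n)$ absorbs any residual dependence on the number of coefficients.
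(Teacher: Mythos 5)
Your proposal is correct and follows exactly the route the paper takes: its proof of the theorem is literally the one-line chaining of Lemma~\ref{lm:q_bnd} and Lemma~\ref{lm:R_bnd}, and you have simply filled in the details the paper leaves implicit (the identification $f_\alpha(p_0)=a_\alpha(q_0)$, the uniform bound $B_\alpha=O(1)$ from \eqref{eq:func_class} that removes the $\max(\sfrac{1}{n},B_\alpha)$ prefactor, and the reconciliation of the high-probability and in-expectation statements via conditioning on the shared design $\bZ$). If anything, your write-up is more careful than the paper's on the last point.
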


\end{document}